\newtheorem{thm}{Theorem}
\newtheorem{lem}{Lemma}
\newtheorem{cor}{Corollary}
\theoremstyle{remark}
\newtheorem{rem}{Remark}
\theoremstyle{definition}
\newtheorem{defi}{Definition}
\newtheorem{asmp}{Assumption}
\def\BibTeX{{\rm B\kern-.05em{\sc i\kern-.025em b}\kern-.08em
		T\kern-.1667em\lower.7ex\hbox{E}\kern-.125emX}}
\begin{document}
	
\title{TACO: Tackling Over-correction in Federated Learning with Tailored Adaptive Correction} 

	\author{\IEEEauthorblockN{Weijie Liu\textsuperscript{1,2}, Ziwei Zhan\textsuperscript{1}, Carlee Joe-Wong\textsuperscript{3}, Edith Ngai\textsuperscript{2}, Jingpu Duan\textsuperscript{4}, Deke Guo\textsuperscript{1}, Xu Chen\textsuperscript{1}, Xiaoxi Zhang\textsuperscript{1}}
		\IEEEauthorblockA{
			\textit{\textsuperscript{1}Sun Yat-sen University, \textsuperscript{2}The University of Hong Kong, \textsuperscript{3}Carnegie Mellon University, \textsuperscript{4}Pengcheng Laboratory} \\
			Email: liuwj0817@connect.hku.hk, zhanzw@mail2.sysu.edu.cn, cjoewong@andrew.cmu.edu, chngai@eee.hku.hk,\\
			duanjp@pcl.ac.cn, \{guodk, chenxu35, zhangxx89\}@mail.sysu.edu.cn} 
        \thanks{Xiaoxi Zhang is the corresponding author. This work was supported by NSFC grant (62472460), Guangdong Basic and Applied Basic Research Foundation (2024A1515010161, 2023A1515012982), Young Outstanding Award under the Zhujiang Talent Plan of Guangdong Province, UGC General Research Funds No. 17203320 and No. 17209822 from Hong Kong.}
 }
    \IEEEoverridecommandlockouts
	\maketitle
	\IEEEpubidadjcol
	\begin{abstract}

        Non-independent and identically distributed (Non-IID) data across edge clients have long posed significant challenges to federated learning (FL) training. Prior works have proposed various methods to mitigate this statistical heterogeneity. 
        While these methods can achieve good theoretical performance, they may lead to the over-correction problem, which degrades model performance and even causes failures in model convergence. In this paper, we provide the first investigation into the hidden over-correction phenomenon brought by the uniform model correction coefficients across clients adopted by the existing methods. To address this problem, we propose TACO, a novel algorithm that addresses the non-IID nature of clients' data by implementing fine-grained, client-specific gradient correction and model aggregation, steering local models towards a more accurate global optimum. Moreover, we verify that leading FL algorithms generally have better model accuracy in terms of communication rounds rather than wall-clock time, resulting from their extra computation overhead imposed on clients. To enhance the training efficiency, TACO deploys a lightweight model correction and tailored aggregation approach that requires minimum computation overhead and no extra information beyond the synchronized model parameters. To validate TACO's effectiveness, we present the first FL convergence analysis that reveals the root cause of over-correction. 
        Extensive experiments across various datasets confirm TACO's superior and stable performance in practice.
	\end{abstract}
	
	
	\section{Introduction}

With the exponential growth in data generation from geographically distributed locations and the proliferation of edge computing technologies~\cite{zhou2019edge}, federated learning (FL)~\cite{mcmahan2017communication, liao2023accelerating,dinh2020federated,xia2024accelerating} has gained much attention as it enables distributed model training through multiple collaborative edge devices without exposing their raw data. 
Each edge device (a.k.a. client) in FL is allowed to perform multiple updates using local datasets before uploading their gradients to the central server. 
Despite its advantages, skewed label distributions and unbalanced sizes of training data across different clients, \textit{i.e.}, non-IID client data, have always been the major obstacle in FL implementation. These discrepancies create inconsistencies between local and global data distributions, incurring inevitable model drift during the local updates performed at edge FL clients. 

\vspace{0.5mm}
\noindent{\bf Over-correction and instability issues.} Previous studies have developed various innovative FL algorithms under non-IID data distribution, including FedProx~\cite{li2020federated}, FoolsGold~\cite{fung2020limitations}, Scaffold~\cite{karimireddy2020scaffold}, STEM~\cite{khanduri2021stem} and FedACG~\cite{fedacg}. These algorithms alleviate the client drift problem by introducing novel designs of model update correction based on FedAvg~\cite{mcmahan2017communication} in different stages of the training process (Algorithm \ref{alg:summary}). Their basic idea of model update correction is to rectify the local model by leveraging the global model parameters or globally updated gradients as correction terms, and steer the local model towards a global optimum rather than a local optimum. However, recent experimental studies~\cite{li2022federated} present two valuable findings: (i.) None of these existing FL algorithms consistently outperform others in all non-IID cases, and sometimes even underperform FedAvg. (ii.) These algorithms exhibit greater instability compared with FedAvg during the training process. 

These findings motivate us to re-evaluate these classic FL algorithms (Section \ref{sec:reeval}), investigating the causes of the unsatisfactory model performance and the actual effectiveness of their model correction methods. 
In this work, we uncover a prevalent over-correction phenomenon that lies in the application of a {\em uniform} coefficient of the correction term for all participating clients in existing algorithms. Their delicate design to alleviate client drift problems can then become ineffective, resulting in compromised model performance and increased instability. Inspired by this, we propose to design a tailored and adaptive correction design for different clients, with the goal of eliminating model over-correction and achieving better FL training performance. 



\vspace{0.5mm}
\noindent{\bf Computation efficiency.} Further, we noticed that prior FL research often use round-to-accuracy~\cite{fedacg} as the key evaluation metric, which was defined as either the model accuracy after completing a certain number of communication rounds or the number of communication rounds required to reach a certain target accuracy. Yet, we believe time-to-accuracy (wall-clock training time to achieve a target accuracy) can more accurately characterize the training efficiency. 
The reason is that 
FL algorithms may vary in their wall-clock runtime per round due to different methods of computing correction terms. 
For instance, STEM~\cite{khanduri2021stem} excels in round-to-accuracy but struggles with time-to-accuracy owing to computing auxiliary parameters and extra gradients at edge clients, which often have limited computation capability. 
Therefore, we 
aim to evaluate both the number of communication rounds (round-to-accuracy) and local training time (time-to-accuracy) of different FL algorithms to reach the same target accuracy. To this end, a thorny problem arises: {\em How to implement an effective method to eliminate over-correction issues without sacrificing computation efficiency for FL training?} 
To realize our computation-efficient FL algorithm with tailored model update corrections
, we make the following technical contributions:
\begin{enumerate}[wide, labelwidth=!, labelindent=0pt]
\item \emph{Insightful re-evaluation of pioneering FL algorithms (Section \ref{sec:reeval})}. We present an in-depth analysis of six leading FL algorithms, assessing their motivation and designs in addressing non-IID data challenges. This re-evaluation highlights their shortcomings, \textit{i.e.}, degraded model performance due to over-correction and unsatisfactory time-to-accuracy performance.

\item \emph{Lightweight FL algorithm with tailored adaptive correction and convergence analysis (Section~{\ref{sec:alg}).}} We propose a novel FL algorithm \textbf{TACO} to address the over-correction issue arising from non-IID data, by employing tailored correction coefficients for individual clients. TACO requires only the uploaded local gradients from participating clients and does not need auxiliary parameters like those in~\cite{karimireddy2020scaffold,wang2020tackling}, thus alleviating computation and communication burdens on edge clients. We also derive the convergence rate of TACO to support our tailored correction design in tackling data heterogeneity. To the best of our knowledge, this is the first analysis that manages to quantify the negative effect of over-correction on model convergence through theoretical analysis and proves that the primary cause stems from the application of uniform model correction coefficients as seen in our evaluated leading FL methods. We also propose a mechanism for detecting freeloaders in TACO, reducing its dependence on client authenticity. 

\item \emph{Comprehensive experiments across diverse datasets, baselines, and settings (Section~{\ref{sec:exp}).}} We conduct a comprehensive suite of experiments spanning eight different datasets and compare against six pioneering baselines to showcase the superiority and high efficiency of TACO. Experiment results affirm TACO's advantages in non-IID data settings. 

\end{enumerate}
\vspace{-2.5mm}

	\section{Preliminaries}
\label{sec:pre}


We consider the commonly adopted parameter-server (PS) architecture for FL. It consists of $N$ distributed edge devices (or clients, defined as a set $\mathcal{N}$) with potentially limited computation resources, and a centralized PS for global aggregation. Each client $i\in \mathcal{N}$ has a local data set $\mathcal{D}_i$ with $D_i = |\mathcal{D}_i|$ data samples $\mathbf{x}_i=[\mathbf{x}_{i,1}, \mathbf{x}_{i,2}, ...,\mathbf{x}_{i,D_i}]$, which is non-IID across $i$.

Let $l(\mathbf{w},\mathbf{x}_{i,j})$ denote the loss function for each sample $\mathbf{x}_{i,j}$, and the local loss function of client $i$ is formulated as:
\begin{equation}\label{eq:local_loss}
	f_i(\mathbf{w})=\frac{1}{D_i}\sum\limits_{\mathbf{x}_{i,j}\in \mathcal{D}_i}l(\mathbf{w},\mathbf{x}_{i,j}).
\end{equation}

The ultimate goal is to train a shared (global) model $\mathbf{w}$ that minimizes the global loss function, defined as: 

\begin{equation}
	f(\mathbf{w})=\sum\limits_{i\in\mathcal{N}}\frac{D_if_i(\mathbf{w})}{D},
        \label{eq:global_loss}
\end{equation}
where 
$D=\sum_{i\in\mathcal{N}}D_i$ represents the number of the entire dataset over all clients. We consider the typical mini-batch FL setting, where each client $i$ uniformly at random samples a mini batch of data, denoted by $\xi_{i,k}^t$ from $\mathcal{D}_i$, to perform the $k^{th}$ local update step in the $t^{th}$ communication round (aggregation round), using the mini-batch SGD algorithm~\cite{shi2022talk}. 
The mini-batch loss function $f_{i}(\mathbf{w},\xi_t)$ of each client $i$ is:
\begin{equation}
	f_{i}(\mathbf{w},\xi_{i,k}^t)=\frac{1}{s}\sum\limits_{\mathbf{x}_{i,j}\in\xi_{i,k}^t}l(\mathbf{w}, \mathbf{x}_{i,j}),
 \label{eq:batch_loss}
\end{equation}
where 
$s$ is the size of the selected mini-batch. With a local learning rate $\eta_l>0$, the rule of updating the local model of client $i$ at the $k^{th}$ local update step in round $t$ is:
\begin{equation}\label{eq:local_update}
	\mathbf{w}_{i,k+1}^t = \mathbf{w}_{i,k}^t - \eta_l g_{i,k}^t, k \in [K],
\end{equation}
where $g_{i,k}^t\triangleq \nabla f_{i}(\mathbf{w}_{i,k}^t,\xi_{i,k}^t)$ and $[K]\triangleq\{0,\cdots, K-1\}$ represents the number of local update steps in every aggregation round. Clients will upload the accumulated local gradient $\Delta_{i}^t$ to the PS after $K$ local update steps, where:
\begin{equation}
    \label{eq:Delta_i^t}
    \Delta_i^t = \mathbf{w}_{i,0}^t - \mathbf{w}_{i,K}^t. 
\end{equation}
In the aggregation procedure, the PS first computes the aggregated global gradient $\Delta_{t+1}$ after collecting all the local gradients $\Delta_i^t$ from FL clients. Then, it will update the global model $\mathbf{w}_{t+1}$ with a global learning rate $\eta_g$:

\begin{equation}
    \Delta_{t+1} = \sum\limits_{i\in\mathcal{N}}p_i \Delta_i^t, \quad\mathbf{w}_{t+1} = \mathbf{w}_t - \eta_g\Delta_{t+1},
    \label{eq:aggregation}
\end{equation}
where $p_i$ is the aggregation weight for each client $i$, which often equals $\frac{1}{N}$ or $\frac{D_i}{D}$, depending on the FL developer~\cite{mcmahan2017communication}. 


	\section{Re-Evaluation of Existing Methods}
\label{sec:reeval}

In this section, we summarize and re-evaluate six pioneering FL methods to expose the over-correction phenomenon and its detrimental effects on the training. Then, we present the unsatisfactory time-to-accuracy performance of these methods. 

\subsection{Representative FL Algorithms for Non-IID Data}
\label{sec:pre_alg}
We outline five FL algorithms (FedProx~\cite{li2020federated}, FoolsGold~\cite{fung2020limitations}, Scaffold~\cite{karimireddy2020scaffold}, STEM~\cite{khanduri2021stem}, and FedACG~\cite{fedacg}) in Algorithm \ref{alg:summary}. We use different colors to highlight their key differences compared to the vanilla FedAvg~\cite{mcmahan2017communication} in different stages of the FL training, based on which we summarize three different categories of FL approaches to tackle the non-IID challenges.

\textbf{Loss function regularization:} \textbf{FedProx} refines the loss function (Line \ref{line:fedprox}) by adding an L2-regularization term $\frac{\zeta}{2}\left\|\mathbf{w}-\mathbf{w}_t\right\|^2$ in every round $t$, where $\zeta$ is the hyper-parameter to control the weight of the regularization. This can efficiently limit the distance between the local and global models so as to mitigate the client drift brought by non-IID data. 

\textbf{Aggregation calibration: FoolsGold (FG)} presents a novel aggregation weight design  ($p_i$ in (\ref{eq:aggregation})) in the global aggregation process. Instead of using the uniform weight $(1/N)$ or the data-quantity $(D_i/D)$ weight design adopted by most FL algorithms, FG computes the aggregation weights based on the similarity between the local gradient and global gradient, \textit{i.e.} $\rho_i=\frac{\Delta_{t+1}^{\top}\Delta_i^t} {||\Delta_{t+1}||\cdot||\Delta_i^t||}$ (Line \ref{line:fedavg_agg}), tackling the non-IID data problem by reducing biases in the aggregation process.

\textbf{Momentum-based correction: Scaffold} proposes a momentum-based correction method in the local update process. Clients will add a momentum term $c_t - c_i^t$ in every local update step (Line \ref{line:scaffold}) to help mitigate the client drift problem due to the non-IID data, where $c_i^t = c_i^{t-1} - c_{t-1} +\Delta_i^{t-1}/K\eta_l$, and $c_t = c_{t-1} + \frac{1}{N}\sum_{i\in\mathcal{N}}(c_i^t - c_i^{t-1})$. Further, \textbf{STEM} proposes a two-sided momentum-based correction method, by adding momentum terms $\mathbf{v}_{i,K-1}^t$ in both local updates (Line \ref{line:stem_local}) and global aggregation steps (Line \ref{line:stem_agg}). \textbf{FedACG}, the latest state-of-the-art (SOTA) FL method~\cite{fedacg}, combines the design of FedProx and STEM algorithms by utilizing the momentum terms $m_t$ in both loss function (Line \ref{line:fedacg_loss}) and global aggregation (Line \ref{line:fedacg_global}) to address the non-IID data problem.

 Despite the apparent dissimilarities among the aforementioned algorithms, their solutions to address non-IID data problem share a core motivation which we encapsulate as \textbf{`Global guides Local':} whether it be through the use of the \textbf{global} model (FedProx), \textbf{global} aggregation weight (FoolsGold), or \textbf{global} aggregated gradients (Scaffold, STEM and FedACG) to help alleviate the client drift problem in \textbf{local} model training. However, they either adopt uniform correction coefficients ($\zeta$ in FedProx, $\alpha$ in Scaffold, $\alpha_t$ in STEM and $\beta$ in FedACG) or do not have any local correction (FoolsGold), which can easily lead to the under-, and often, over-correction phenomenon, as illustrated in Fig.~\ref{fig:tailored} and further explained in Section~\ref{subsec:over-correction}. 

\begin{figure}[t]
    \centering
    \includegraphics[width=8.5cm]{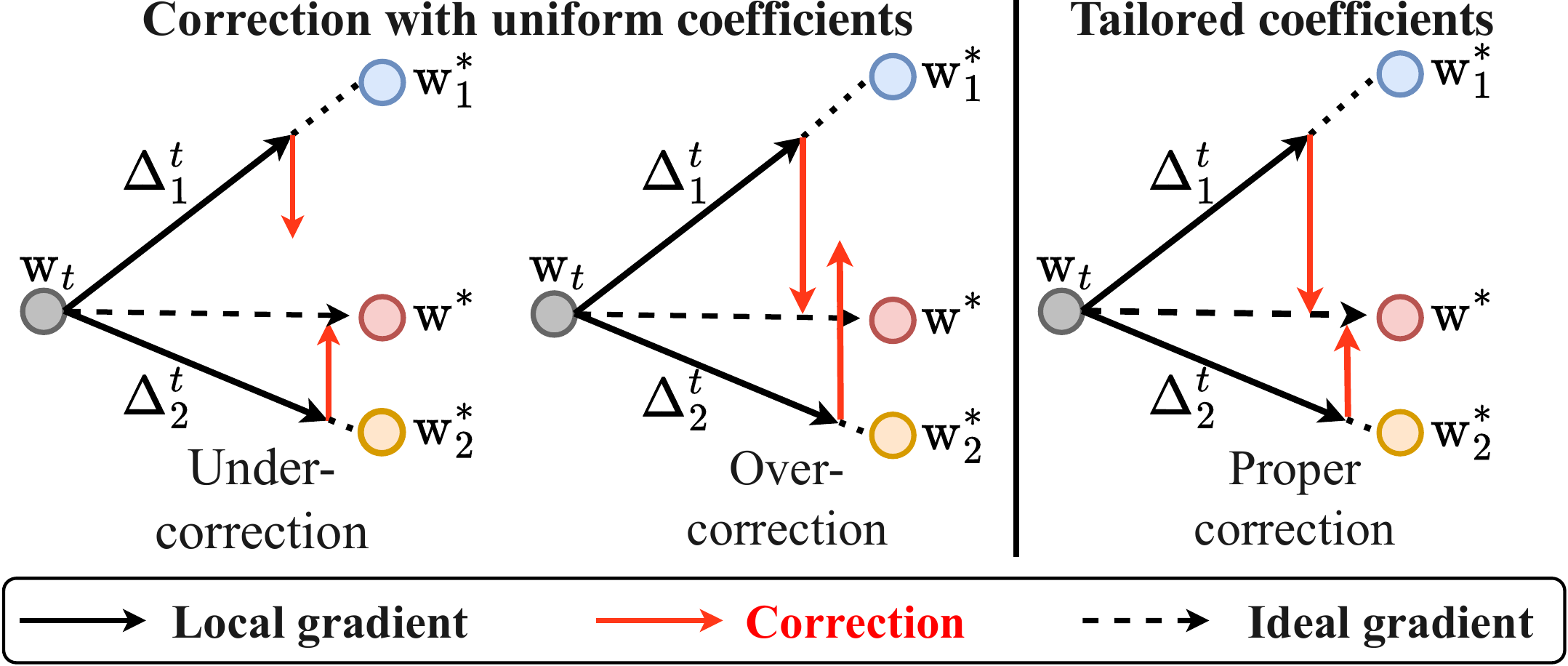}
    \caption{Federated Learning with non-IID data: Uniform correction coefficients versus tailored correction coefficients}
    \label{fig:tailored}
    \vspace{-1.5em}
\end{figure}
\newcommand{\highlighta}[1]{%
  \colorbox{gray!8}{\parbox{200pt}{#1}}%
}
\newcommand{\highlightb}[1]{%
  \colorbox{gray!8}{\parbox{188pt}{#1}}%
}
\newcommand{\highlightc}[1]{%
  \colorbox{gray!8}{\parbox{220pt}{#1}}%
}
\begin{algorithm}[ht]
 \SetKwData{Left}{left}\SetKwData{This}{this}\SetKwData{Up}{up} \SetKwFunction{Union}{Union}\SetKwFunction{FindCompress}{FindCompress} \SetKwInOut{Input}{Input}\SetKwInOut{Output}{Output}
	\Input{Step size $\eta_l, \eta_g$, {\color{cyan}$\eta_t$}, number of local updates $K$, aggregation round $T$}
	\Output{$\mathbf{w}_T$}
	\SetKwInOut{Initialize}{Initialize}
	\Initialize{$\mathbf{w}_0$, \textcolor{magenta}{$c_0, c_i^0$}, \textcolor{blue}{$m_0$}}
	\For{$t=0, 1, ..., T-1$}{
	    
	    	\For{each client $i \in \mathcal{N}$ \bf{in parallel}}{
                    $\mathbf{w}_{i,0}^t \leftarrow \mathbf{w}_t$ \;
                    \highlighta{\textcolor{blue}{FedACG}:$f_i(\mathbf{w},\xi)\leftarrow (\ref{eq:batch_loss}) + \frac{\beta}{2}\left\|\mathbf{w}-\mathbf{w}_t-m_t\right\|^2$\;\label{line:fedacg_loss}
                    \textcolor{red}{FedProx}:$f_i(\mathbf{w},\xi)\leftarrow (\ref{eq:batch_loss}) + \frac{\zeta}{2}\left\|\mathbf{w}-\mathbf{w}_t\right\|^2$\; \label{line:fedprox}
                    {\bf Others}: $f_i(\mathbf{w},\xi) \leftarrow (\ref{eq:batch_loss})$ \;}
                    \For{$k=0,1,...,K-1$}{
                        $g_{i,k}^t= \nabla f_i(\mathbf{w}_{i,k}^t, \xi_{i,k}^t)$\;
                        \highlightb{\textcolor{magenta}{Scaffold}:
                        $\mathbf{v}_{i,k}^t \leftarrow g_{i,k}^t + \alpha(c_{t} - c_i^t)$\label{line:scaffold}\;
                        \textcolor{cyan}{STEM}:
                        $\mathbf{v}_{i,k}^t \leftarrow g_{i,k}^t + (1-\alpha_t)(\mathbf{v}_{i,k-1}^{t} - \nabla f_i(\mathbf{w}_{i,k-1}^t, \xi_{i,k}^t))$\label{line:stem_local}\;
                        {\bf Others}:\,
                        $\mathbf{v}_{i,k}^t \leftarrow g_{i,k}^t$\;}
                        $\mathbf{w}_{i,k+1}^t \leftarrow \mathbf{w}_{i,k}^t - \eta_l \mathbf{v}_{i,k}^t$\;
                    }
                $\Delta_i^t \leftarrow \mathbf{w}_{i,0}^t - \mathbf{w}_{i,K}^t$\;
                Update and send $\Delta_i^t, {\color{magenta}c_i^{t+1}}, {\color{cyan}\mathbf{v}_{i,K-1}^t}$ to server\;
		}

        Update auxiliary parameters {\color{blue}$m_{t+1}$}, {\color{magenta}$c_{t+1}$}, {\color{orange}$\rho_i$}, {\color{cyan}$\alpha_{t+1}$}\;
	\highlightc{\textcolor{orange}{FoolsGold}:
		 $\Delta_{t+1} \leftarrow \frac{1}{KN\eta_l} \sum_{i=1}^N\frac{\rho_i\Delta_i^t}{\sum_{i=1}^N\rho_i}$ \label{line:foolsgold}\;
       \textcolor{cyan}{STEM}:
		 $\Delta_{t+1} \leftarrow \frac{1}{KN\eta_l} \sum_{i=1}^{N}(\Delta_i^t + \mathbf{v}_{i,K-1}^t)$ \label{line:stem_agg}\;
        \textcolor{blue}{FedACG}: $\Delta_{t+1} \leftarrow \frac{1}{D\eta_l} \sum_{i=1}^{N}D_i\Delta_i^t +\frac{m_{t+1}}{\eta_g}$\;\label{line:fedacg_global}
        {\bf Others}:
        $\Delta_{t+1} \leftarrow (i) \frac{1}{KN\eta_l} \sum_{i=1}^{N}\Delta_i^t ,\, (ii) \sum_{i=1}^{N}\frac{D_i\Delta_i^t}{D\eta_l}$\; \label{line:fedavg_agg}}
		$\mathbf{w}_{t+1} \leftarrow \mathbf{w}_t - \eta_g\Delta_{t+1}$ \; 
   
  Send $\mathbf{w}_{t+1}$,\,{\color{magenta}$c_{t+1}$},\,{\color{cyan}$\alpha_{t+1}$},\, {\color{blue}$m_{t+1}$} to clients

	}
	\caption{FedAvg/\textcolor{red}{FedProx}/\textcolor{orange}{FG}/\textcolor{magenta}{Scaffold}/\textcolor{cyan}{STEM}/\textcolor{blue}{FedACG}}
  \label{alg:summary}
  \vspace{-0.28em}
\end{algorithm}
\subsection{The Over-correction Phenomenon in Federated Learning}\label{subsec:over-correction}
We further illustrate how uniform correction coefficients can cause over-correction in Fig.~\ref{fig:tailored}. The optimal local model parameters $\mathbf{w}_1^*$ and $\mathbf{w}_2^*$ can deviate significantly from the global optimum $\mathbf{w}^*$ due to the skewed data distribution, leading to inevitable client drift during the local updates and impede model convergence in the absence of local model correction schemes, 
such as FedAvg~\cite{mcmahan2017communication} and FoolsGold~\cite{fung2020limitations}. 

Prior works have proposed various methods to address this drift issue, as analyzed in Section~\ref{sec:pre_alg}, and they utilized auxiliary parameters 
to control the correction terms in the local updates. Although these works design client-specific correction terms, \textit{e.g.}, $c_t - c_i^t$ in Scaffold, their correction coefficients (\textit{e.g.}, factors $\zeta$ in FedProx and $\alpha$ in Scaffold shown in Algorithm~\ref{alg:summary}) are uniform across clients and can easily result in under-correction or, more detrimentally, over-correction. As shown in Fig.~\ref{fig:tailored}, client 1's training data exhibits a higher non-IID degree compared to client 2. Therefore, setting a uniform correction factor that suits client 2 results in under-correction for client 1. Conversely, a uniform factor tailored to client 1 leads to over-correction for client 2. Since training data of different clients often have varying non-IID degrees, and the distance between the local optimum $\mathbf{w}_i^*$ and global optimum $\mathbf{w}^*$ varies, adopting a uniform correction factor for all clients is not the most effective way to correct local models towards the global optimum simultaneously. 

\begin{figure}
	\centering
	\begin{subfigure}[t]{0.48\linewidth}
		\includegraphics[width=\textwidth]{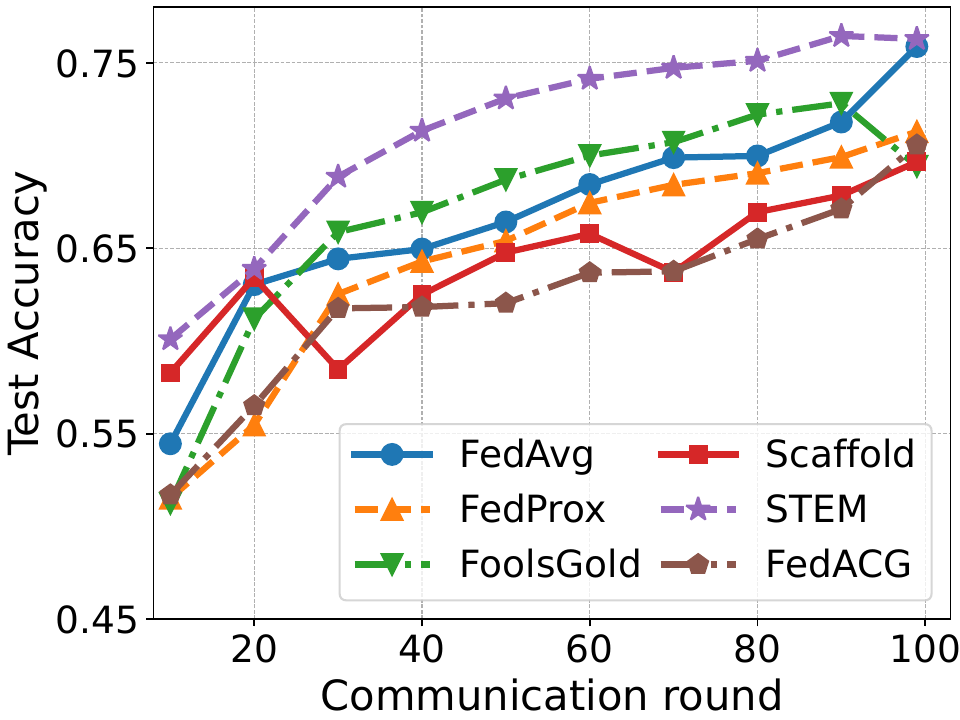}
		\caption{Round-Accuracy (FMNIST)}
            \label{fig:reeval-round-fmnist}
	\end{subfigure}
	\begin{subfigure}[t]{0.48\linewidth}
		\includegraphics[width=\textwidth]{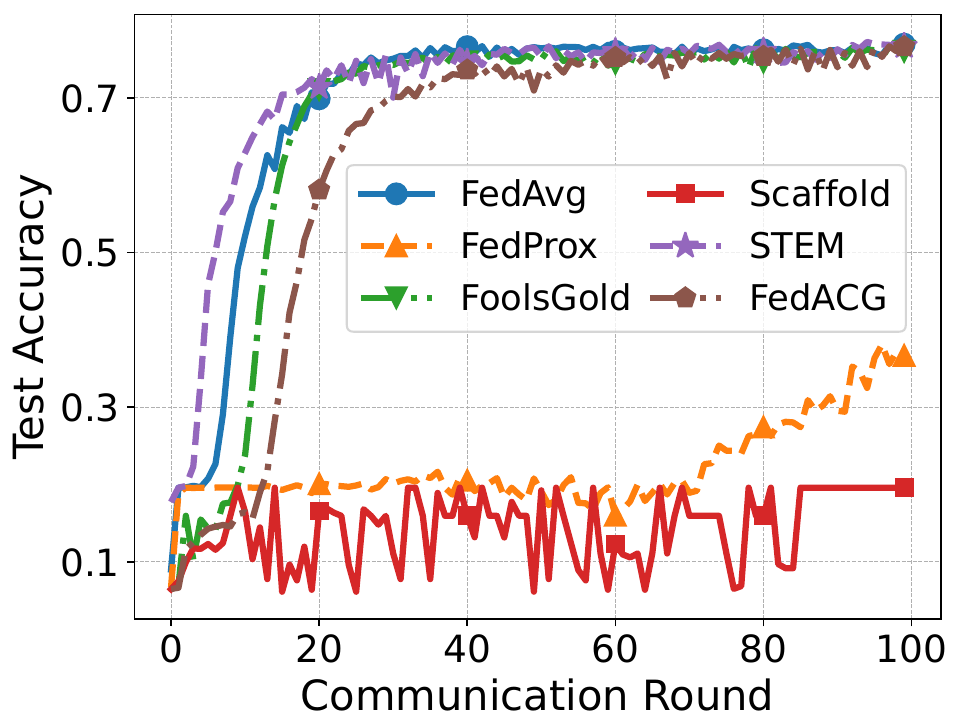}
		\caption{Round-Accuracy (SVHN)}
  \label{fig:reeval-round-svhn}
	\end{subfigure}
        \begin{subfigure}[t]{0.48\linewidth}
		\includegraphics[width=\textwidth]{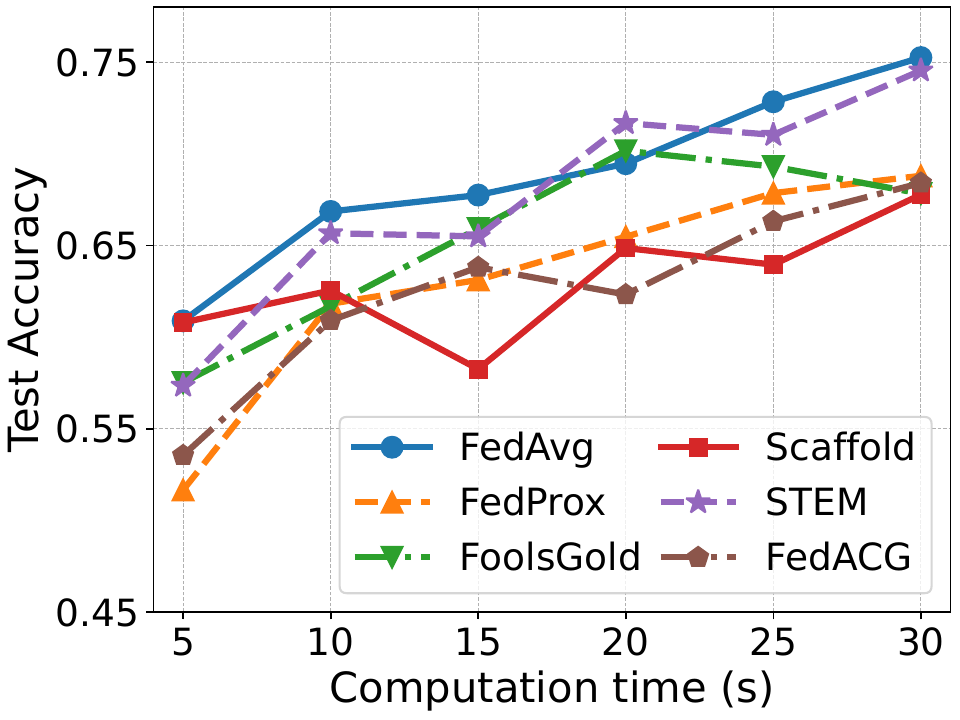}
		\caption{Time-Accuracy (FMNIST)}
  \label{fig:reeval-time-fmnist}
	\end{subfigure}
        \begin{subfigure}[t]{0.48\linewidth}
		\includegraphics[width=\textwidth]{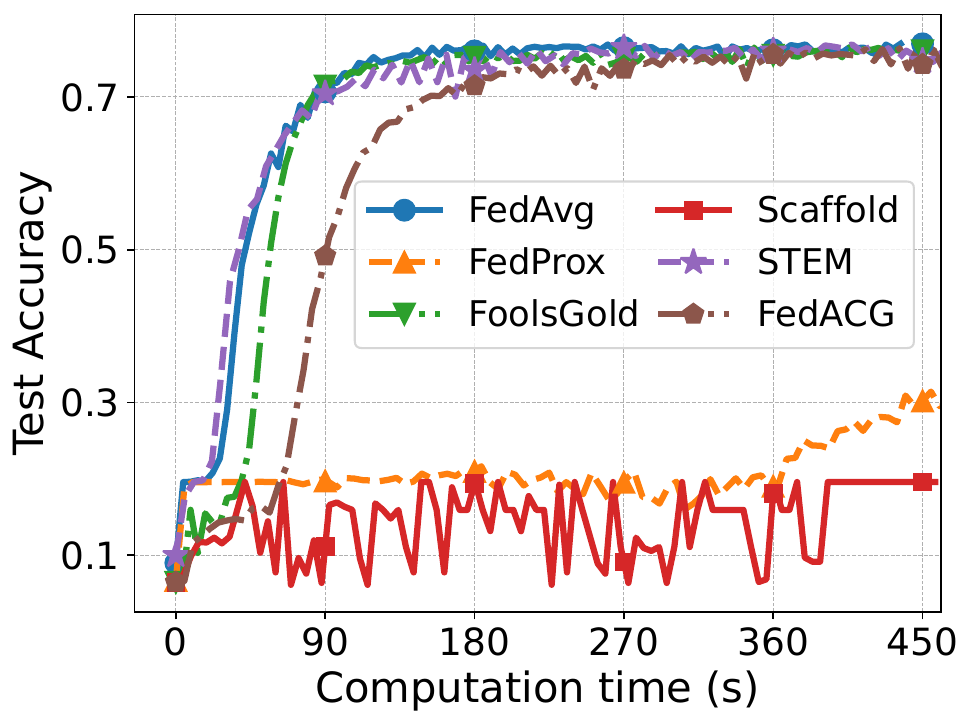}
		\caption{Time-Accuracy (SVHN)}
  \label{fig:reeval-time-svhn}
	\end{subfigure}
 
	\caption{Round-to-accuracy and time-to-accuracy re-evaluations}
	\label{fig:reeval-1}
\vspace{-2mm}
\end{figure}

We then conduct experiments under FMNIST and SVHN~\cite{netzer2011reading} datasets to validate our intuition. The re-evaluation experiments follow the experimental setup presented in Section \ref{sec:exp}. In order to exclude the potential impact of misconfigured hyper-parameters, we conduct experiments with multiple hyper-parameter configurations for different baseline algorithms, and present their best results in Fig.~\ref{fig:reeval-1}. Specifically, the candidate values of $\zeta$ (FedProx) and $\beta$ (FedACG) are \{0.001, 0.01, 0.1\}, the initial value of $\alpha_t$ in STEM is tuned from the set \{0.05, 0.1, 0.2\}, and the $\alpha$ in Scaffold is consistently set to 1, adhering to the original work. Figs. \ref{fig:reeval-round-fmnist} and \ref{fig:reeval-round-svhn} show that most algorithms except STEM cannot outperform FedAvg under these two datasets. FedProx and Scaffold have lower test accuracies and more severe instability compared to the vanilla FedAvg due to the over-correction brought by their applied uniform correction coefficients. Particularly with the SVHN dataset, both of these algorithms even fail to achieve model convergence, whereas FedAvg and FoolsGold, which do not incorporate any correction terms in their local updates, successfully complete training. We alleviate this over-correction issue by improving 
 FedProx and Scaffold with our tailored correction design, as shown in Fig.~\ref{fig:extension}, stressing the necessity of a client-specific correction factor design. We further conduct extensive experiments on six different datasets in Section \ref{sec:exp} to substantiates the prevalence of over-correction phenomenon.  Based on our literature review, we are not the first work that cast doubt upon the stability and model performance in existing FL methods for non-IID data. Comprehensive studies like~\cite{li2022federated} also report the instability and mediocre performance of these FL algorithms, but they do not provide further investigations and solutions tailored to these phenomena. In this paper, we present theoretical analysis (Section \ref{sec:taco_theory}) and solid empirical evidence (Section \ref{sec:exp}) to unveil the mask of the over-correction phenomenon.

\subsection{Unsatisfactory Time-to-accuracy Performance}
\begin{table}[t]
\centering
\caption{Computation time per 100 local updates (CNN)}
\begin{footnotesize}
\begin{tabular}{cccccc}
\toprule
Dataset                 & \begin{tabular}[c]{@{}c@{}}FedAvg\\ FoolsGold\end{tabular} & FedProx  & Scaffold & STEM     & FedACG   \\ \hline
\multirow{2}{*}{FMNIST} & 0.323s                                                     & 0.399s   & 0.348s   & 0.455s   & 0.401s   \\
                        & +0.0\%                                                     & +23.52\% & +7.73\%  & +40.86\% & +24.15\%  \\ \hline
\multirow{2}{*}{SVHN}   & 0.441s                                                     & 0.511s   & 0.462s   & 0.543s   & 0.513s   \\
                        & +0.0\%                                                     & +15.87\% & +4.76\%  & +23.13\% & +16.33\% \\ \bottomrule
\end{tabular}
\end{footnotesize}
\label{tab:reeval_time}
\vspace{-1.2em}
\end{table}
We notice that STEM appears to show outstanding round-to-accuracy performance in Figs. \ref{fig:reeval-round-fmnist} and \ref{fig:reeval-round-svhn}. However, clients need to perform gradient computations two times in STEM (Line \ref{line:stem_local}, Algorithm \ref{alg:summary}), indicating more computation overhead and thus longer wall-clock time per round. Figs \ref{fig:reeval-time-fmnist} and \ref{fig:reeval-time-svhn} together with Table \ref{tab:reeval_time} show that most algorithms, especially STEM, significantly increase the computation time of local updates in every FL round, which leads to their mediocre time-to-accuracy performance.  We also observe that the incorporation of regularization terms in FedProx and FedACG algorithms can also impose substantial computation burden on gradient computations. Hence, it is imperative to design a client-friendly FL algorithm for non-IID data to avoid imposing extra computational burden on FL clients. 

    In conclusion, we unveil the detrimental effect of over-correction and the poor time-to-accuracy performance of pioneering FL algorithms in this section. Based on these findings, we will propose our algorithm \textbf{TACO} and present theoretical analysis in Section \ref{sec:alg}.

	\section{TACO: FL with Tailored Adaptive Correction}
\label{sec:alg}
In this section, we propose \textbf{TACO}, a lightweight \underline{t}ailored \underline{a}daptive \underline{co}rrection FL algorithm to address the over-correction problem for training with non-IID data, while maintaining great time-to-accuracy performance. We first introduce our algorithm design details and then present a convergence analysis to support our design intuition from the theoretical view.
\subsection{TACO: Algorithm Design}
\label{sec:taco_design}
\begin{figure}[t]
    \centering
    \includegraphics[width=8.5cm]{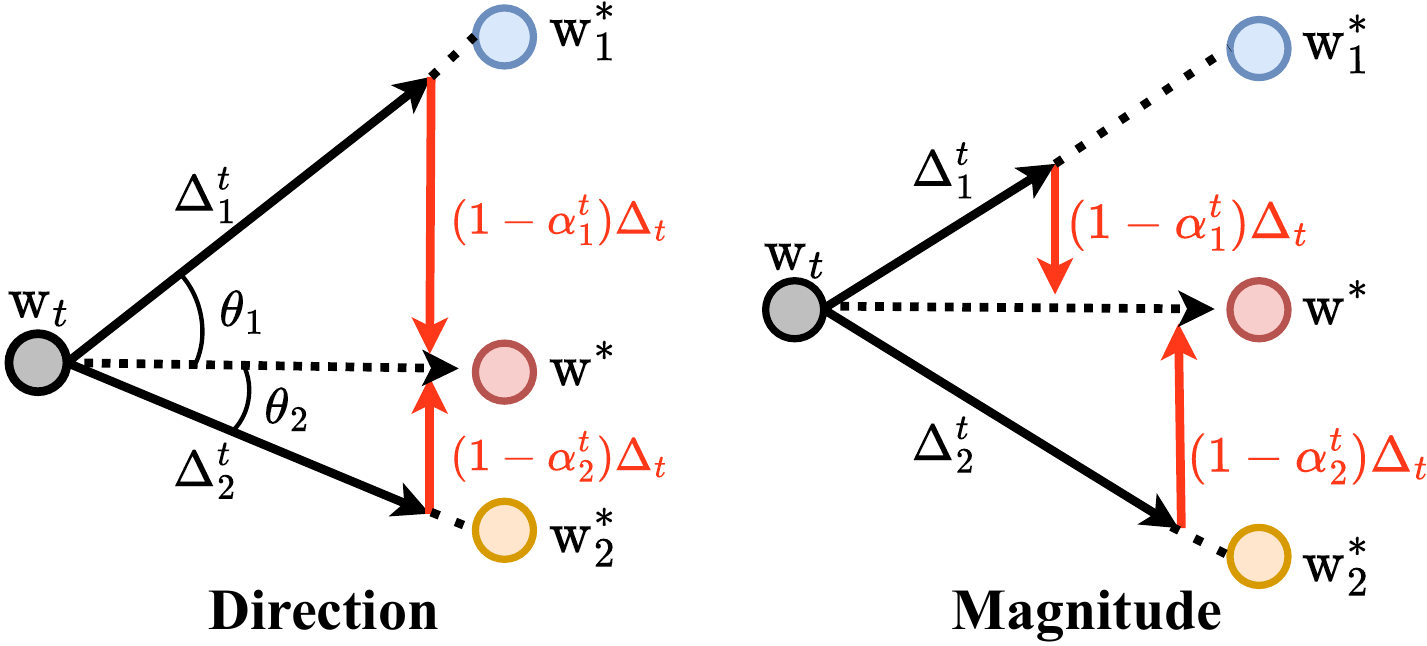}
    \caption{Clients with larger $\theta_i$ and magnitude of the local gradient $\Delta_i^t$ need larger correction factors $1-\alpha_i^t$.}
    \label{fig:corr-cos}
    \vspace{-3mm}
\end{figure}
\noindent\textbf{Tailored, adaptive local model correction.}
The \textbf{`Global guides Local'} principle in previous research inspires us to leverage the global gradient $\Delta_t $ 
to calibrate the local update steps. To further achieve the tailored adaptive correction over different clients, we establish the correction coefficient $\alpha_i^t$ for each client $i$ by utilizing both the {\em magnitude} of the local gradient $(\left\|\Delta_i^t\right\|)$ and {\em direction} based on cosine similarity between $\Delta_i^t$ and the average global gradient $\bar{\Delta}_{t} = \sum_{i\in\mathcal{N}}\Delta_i^{t-1}/N$:
\begin{align}
    \alpha_i^{t}=\left(1 - \frac{||\Delta_i^{t-1}||}{\sum\limits_{i\in\mathcal{N}}||\Delta_i^{t-1}||}\right) \max\left\{\frac{{{\Delta_i^{t-1}}^\top\bar{\Delta}_{t}}} {||\Delta_i^{t-1}||\cdot||\bar{\Delta}_{t}||}, 0\right\}.
    \label{eq:alpha_compute}
\end{align}
The local update step in TACO can be formulated as:
\begin{equation}
\label{eq:local_update_TACO}
    \mathbf{w}_{i,k+1}^t \leftarrow \mathbf{w}_{i,k}^t - \eta_l (g_{i,k}^t + \gamma(1-\alpha_i^t) \Delta_t),
\end{equation}
where $1 - \alpha_i^t$ is the {\bf correction factor} of client $i$ at the communication round $t$, $\gamma\in(0,1)$ is a hyper-parameter to determine the maximum correction factor. $\Delta_t$ is determined by our aggregation scheme and will be formalized in Eq.~\eqref{eq:taco_aggregate}.

We illustrate our design intuition of correction factor $\alpha_i^t$ in Fig.~\ref{fig:corr-cos}. Fig.~\ref{fig:corr-cos}-Left shows that client 1 should have a larger correction factor $1-\alpha_1^t$ for local updates compared to client 2 due to his lower cosine similarity (larger angle $\theta_1$) with the vector pointing towards the global optimum $\mathbf{w^*}$ (the dashed line from $\mathbf{w}_t$ to $\mathbf{w^*}$), so that both clients can appropriately steer their local models towards the global optimum $\mathbf{w}^*$. Meanwhile, Fig.~\ref{fig:corr-cos}-Right presents that client 2 with a larger magnitude of the local updates (larger $||\Delta_i^t||$) should also have a larger correction factor $1-\alpha_2^t$ for local updates compared to client 1. This insight is pivotal to our design in the computation of $\alpha_i^t$ in Eq. (\ref{eq:alpha_compute}) and we will further demonstrate its optimality through theoretical analysis in Section \ref{sec:taco_theory}. We also validate that the computation of $\alpha_i^t$ in TACO incurs limited additional computation overhead compared to the extra computation procedure in prior FL methods, as shown in Figure \ref{fig:compute_time}. 

\noindent\textbf{Tailored aggregation design.} 
The design of the aggregation function for TACO benefits from the design of the correction coefficient $\alpha_i^{t+1}$ in Eq.~\eqref{eq:alpha_compute}. Clients with a higher value of $\alpha_i^{t+1}$ indicate that their local updated gradients $\Delta_i^t$ are more similar to the ideal updated gradients towards the global optimum. Therefore, the values of $\alpha_i^{t+1}$ of different clients can also well reflect the non-IID degrees of their local datasets. To validate this argument, we design a synthetic non-IID distribution by randomly dividing clients into three groups: in \textbf{Group A}, each client has only $10\%$ of the labels; In \textbf{Group B}, each client has about $20\%$ of the labels; In \textbf{Group C}, each client has $50\%$ of the labels. Clients in the same group can have data with different labels which are randomly assigned, and the more labels a client possesses, the more closely its local data distribution aligns with the global data distribution. 
 We compare the average value of the correction coefficients $\alpha_i^t$ across different client groups in Table \ref{tab:average_alpha}. We observe that TACO can distinguish clients with different label diversity and assign higher $\alpha_i^t$'s to those who have more diverse training data. These motivate us to propose the aggregation function of TACO in \eqref{eq:taco_aggregate} by utilizing the correction coefficient $\alpha_i^{t+1}$, and clients with more diverse data can obtain higher aggregation weights during global aggregation, formalized as follows.
\begin{equation}
\label{eq:taco_aggregate}
    \Delta_{t+1} = \frac{1}{K\eta_l\sum\limits_{i\in\mathcal{N}}\alpha_i^{t+1}} \sum_{i=1}^{N}\alpha_i^{t+1}\Delta_i^t.
\end{equation}

\setlength{\tabcolsep}{3pt}
\begin{table}[t]
\centering
\caption{Average value of $\alpha_i^t$ of different groups of clients.} 
\begin{tabular}{ccccc}
\toprule
\textbf{Group}      & \textbf{MNIST} &\textbf{FMNIST} &\textbf{SVHN} &\textbf{CIFAR10}  \\ \hline
Group A                                 & 0.23±0.02          & 0.18±0.09         & 0.23±0.09        & 0.22±0.07 \\ 
Group B                              & 0.27±0.03          & 0.27±0.03          & 0.33±0.01        & 0.32±0.01 \\ 
Group C                              & 0.41±0.02          & 0.37±0.01          & 0.43±0.01        & 0.36±0.01 \\ 
\textbf{Freeloaders}                           & \textbf{0.88±0.01}          & \textbf{0.75±0.01}          &  \textbf{0.86±0.01}       & \textbf{0.88±0.06} \\ \bottomrule
\end{tabular}
\label{tab:average_alpha}
\vspace{-1.6em}
\end{table}
\noindent\textbf{Freeloader detection.} 
We further demonstrate the ingenuity of our design of $\alpha_i^t$ in \eqref{eq:alpha_compute}, which not only assists tailored corrections and aggregation, but also provides a bonus of making TACO less dependent on authenticity. 
Freeloaders~\cite{fraboni2021free, zhang2022enabling,lin2019free,chen2024toward} refer to lazy clients that only upload previous global gradients $\Delta_t$ received without contributing any new local updates. To discourage this behavior and ensure system stability, we propose a lightweight inspection procedure (Line \ref{line:inspect_lazy}, Algorithm \ref{alg:taco}) in the global aggregation step, removing the incentives of being freeloaders. In table \ref{tab:average_alpha}, we discover that the global gradients between two consecutive aggregation rounds $\Delta_t$ and $\Delta_{t+1}$ have a relatively higher cosine similarity (the second term of $\alpha_i^{t+1}$ in (\ref{eq:alpha_compute})), leading to significantly higher values of $\alpha_i^t$ for freeloaders compared to normal clients. Therefore, we can filter out clients with significantly high values of $\alpha_i^{t+1}$ as suspicious clients, and we denote $\kappa$ as the detection threshold:
\begin{equation}
    \alpha_i^{t+1} \geq \kappa, i\in \mathcal{N}.
    \label{eq:lazy-threshold}
\end{equation}
TACO will expel clients satisfying Eq.~\eqref{eq:lazy-threshold} over $\lambda$ times. We evaluate TACO's ability to distinguish between freeloaders and regular clients in non-IID FL training (Section \ref{sec:exp}, Table \ref{tab:free_thres}). 

\begin{algorithm}[t] \SetKwData{Left}{left}\SetKwData{This}{this}\SetKwData{Up}{up} \SetKwFunction{Union}{Union}\SetKwFunction{FindCompress}{FindCompress} \SetKwInOut{Input}{Input}\SetKwInOut{Output}{Output}
	\Input{Step size $\eta_l, \eta_g$, batch size $s$, number of local updates $K$, number of communication rounds $T$, maximum correction $\gamma$, thresholds $\kappa, \lambda$}
	\Output{$\mathbf{z}_T$}
	\SetKwInOut{Initialize}{Initialize}
	\Initialize{$\mathbf{w}_0, \Delta_0 \leftarrow \mathbf{0}$, $\alpha_i^0 \leftarrow 0.1, \forall i$}
	\For{$t=0, 1, ..., T-1$}{
	    	\For{each client $i \in \mathcal{N}$ \bf{in parallel}}{
                    $\mathbf{w}_{i,0}^t \leftarrow \mathbf{w}_t$ \;
                    \For{$k=0,1,...,K-1$}{
                        Compute gradient $g_{i,k}^t= \nabla f_i(\mathbf{w}_{i,k}^t, \xi_{i,k}^t)$\;
                        Perform local update according to Eq.~(\ref{eq:local_update_TACO})\;
                    }
                $\Delta_i^t \leftarrow \mathbf{w}_{i,0}^t - \mathbf{w}_{i,K}^t$\tcp*{Local gradient}
                Send the local gradient $\Delta_i^t$ to server\;
		}

            
	\tcp{Server side}
            Compute the $\alpha_i^{t+1}$ according to \eqref{eq:alpha_compute} \label{line:alpha}\;
            
            Compute the global gradient $\Delta_{t+1}$ using Eq. (\ref{eq:taco_aggregate}) \label{line:delta_t2}\;
		
		$\mathbf{w}_{t+1} = \mathbf{w}_t - \eta_g\Delta_{t+1}$\;
            Expel clients if satisfying Eq.~(\ref{eq:lazy-threshold}) $\lambda$ times \label{line:inspect_lazy}\;
            Send $\mathbf{w}_{t+1}, \alpha_i^{t+1}$ to each client $i$
	}
        Compute $\mathbf{z}_T$ based on Eq.~(\ref{eq:zt})
	\caption{TACO (Tailored Adaptive Correction)}
 \label{alg:taco}
\end{algorithm}
\setlength{\tabcolsep}{5pt}
\begin{table}[t]
\centering
\begin{footnotesize}
\caption{Comparison with pioneering FL algorithms}
\begin{tabular}{ccccc}
\toprule
              & \begin{tabular}[c]{@{}c@{}}Local\\ Corr.\end{tabular} & \begin{tabular}[c]{@{}c@{}}Agg.\\ Corr.\end{tabular} & \begin{tabular}[c]{@{}c@{}}Freeloader\\ Detection\end{tabular} & \begin{tabular}[c]{@{}c@{}}Client Overhead\\(Time per round)\end{tabular} \\ \hline
FedAvg        & \ding{55}                                                    & \ding{55}                                                          & \ding{55}                                                   & Low\quad\;\; (4.50±0.02s)    \\ 
FedProx       & \ding{51}                                                    & \ding{55}                                                          & \ding{55}                                                   & Medium (5.05±0.02s)    \\ 
Scaffold      & \ding{51}                                                    & \ding{55}                                                          & \ding{55}                                                   & Medium (5.01±0.02s)   \\ 
FoolsGold     & \ding{55}                                                    & \ding{51}                                                          & \ding{55}                                                   & Low \quad\;\;(4.50±0.02s)   \\ 
STEM          & \ding{51}                                                    & \ding{51}                                                          & \ding{55}                                                   & High \quad\;\,(6.48±0.01s)   \\ 
FedACG        & \ding{51}                                                    & \ding{51}                                                          & \ding{55}                                                   & Medium (5.07±0.02s)   \\ 
\textbf{TACO} & \ding{51}                                                    & \ding{51}                                                          & \ding{51}                                                   & Low \quad\;\;(4.81±0.01s)   \\ \bottomrule
\end{tabular}
\vspace{-4.2mm}
\label{tab:brief_compare}
\end{footnotesize}
\end{table}
To conclude, we present Table \ref{tab:brief_compare} to summarize the differences between \textbf{TACO} and other pioneering algorithms 
across four dimensions: Local update correction, global aggregation correction, freeloader detection, and computation overhead of clients (local computation time per round training ResNet18 on the CIFAR-100). Table \ref{tab:brief_compare} shows that other algorithms cannot achieve satisfactory performance across all four dimensions simultaneously, unable to balance model corrections, training efficiency, and robustness. Among these algorithms, FoolsGold and FedACG come closest to TACO. The primary advantage of TACO is its capability to handle the over-correction problem induced by non-IID data and identify freeloaders without imposing high computation burden on clients. 

\vspace{-3.2mm}
\subsection{TACO: Convergence Analysis}
\label{sec:taco_theory}
In this section, we derive the error bound (Theorem \ref{thm:1}) and convergence analysis (Corollary \ref{cor:converge_rate}) of FL with our tailored correction coefficients $\alpha_i^t$ to quantify the negative effect of model over-correction theoretically, based on which we present a corollary to demonstrate the superiority of TACO's design for $\alpha_i^t$ (Corollary \ref{cor:optimal-alpha}). We begin by listing assumptions 
commonly adopted in pioneering FL research~\cite{xu2021fedcm,bottou2018optimization, wang2019adaptive, rodio2023federated, wang2023federated,liu2023dynamite}.
\begin{asmp}
\label{asmp1}
($L$-smooth) For each client $i\in \mathcal{N}$,  and some scalar $L > 0$, each local loss function $f_{i}$ satisfies: $\left \| \nabla f_{i}(\mathbf{w}_1)-\nabla f_{i}(\mathbf{w}_2)  \right \| \leq L\left \| \mathbf{w}_1-\mathbf{w}_2  \right \| \text{\emph{ for all }} \mathbf{w}_1, \mathbf{w}_2$. 
\end{asmp}
\begin{asmp}
\label{asmp2}
(Heterogeneous and bounded cosine similarity between the local gradients and true global gradient) For some scalars $\mu_i>0, c_i > 0$, the accumulated local gradient $\Delta_i^t$ of client $i$ and the true global gradient $\nabla f(\mathbf{w}_t)$ in round $t$, satisfy: 
\begin{align}
		(\nabla f(\mathbf{w}_t))^\top\mathbb{E}[\Delta_i^{t}]&\leq\,\mu_i\left\|\nabla f(\mathbf{w}_t)\right\|^2 \label{eq:asmp2-1},\\
            \cos(\nabla f(\mathbf{w}_t), \mathbb{E}[\Delta_i^{t}]) &\geq  c_i.\label{eq:asmp2-2}
\end{align}
\end{asmp}
\begin{asmp}
\label{asmp3}
(Bounded gradient) For some scalar $G > 0$, the norm of the true global gradient $\nabla f(\mathbf{w})$ satisfies: $\|\nabla f(\mathbf{w})\| \leq G, \forall \mathbf{w}$.
\end{asmp}



\begin{rem}
    Prior research~\cite{kairouz2021advances} summarizes various assumptions to describe the non-IID property of clients' local training data in existing FL works, \textit{e.g.}, bounded inter-client gradient variance~\cite{wang2023fedmos}, bounded optimal objective difference~\cite{li2019convergence}, and bounded gradient dissimilarity~\cite{li2020federated}. However, they all impose homogeneous bounds across clients, providing only an overall description of the non-IID training data, rather than a detailed and client-specific representation for each client. Our Assumption \ref{asmp2} represents the first attempt to capture diverse non-IID degrees among client data. This characterization $(\mu_i, c_i)$ is crucial 
    for quantifying the deviation between each client's local optimum ($\mathbf{w}_i^*$) and the global optimum $(\mathbf{w}^*)$, providing a design knob for our client-specific corrections. Moreover, Assumption~\ref{asmp2} is weaker than the common requirement for unbiased mini-batch gradients.
\end{rem}
We further introduce definitions and associated lemmas crucial for proving the main theorem. The proof of Theorem \ref{thm:1} and Corollary \ref{cor:converge_rate}-\ref{cor:optimal-alpha} are presented in the Appendix.
\begin{defi}
We define $\Tilde{\Delta}_t$ as the average mini-batch gradient over clients in communication round $t$, formalized as follows.
    \begin{equation}
    \label{eq:tilde_delta_t}
        \Tilde{\Delta}_t = \frac{1}{ KN}\sum_{i\in\mathcal{N}}\sum_{k=0}^{K-1}g_{i,k}^{t}. 
    \end{equation}
\end{defi}
\begin{defi}
    We define $\alpha_t$ as the average correction coefficient applied over all clients in communication round $t$ and $\mathbf{z}_t$ represents the final model output after communication round $t$, formalized as follows.
    \begin{align}
        \alpha_t &= 1 - \frac{1}{N}\sum_{i\in\mathcal{N}}(1-\alpha_i^t) = \sum_{i\in\mathcal{N}}\alpha_i^t/N, \label{eq:alpha_t}\\
        \mathbf{z}_t &= \mathbf{w}_t + (1-\alpha_t)(\mathbf{w}_t - \mathbf{w}_{t-1}), \label{eq:zt}
    \end{align}
    where $\mathbf{w}_t$ is the aggregated model parameter in communication round $t$, and we have $\mathbf{z}_t =\mathbf{w}_t$ if $\alpha_i^t = 1, \forall i \in \mathcal{N}$. 
\end{defi}
\begin{lem}[Update rule of the aggregated global gradient $\Delta_t$]
	$\Delta_t$ can be treated as the exponential moving average of the accumulated local gradients from all clients.
	\begin{equation}
 		\Delta_{t+1} = \Tilde{\Delta}_{t} + (1-\alpha_t)\Delta_t.
	\end{equation}
    \label{lem:Delta_t}
\end{lem}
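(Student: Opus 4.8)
The plan is to unroll the TACO local update rule \eqref{eq:local_update_TACO} over the $K$ inner steps and then feed the result into the averaging aggregation. First I would sum \eqref{eq:local_update_TACO} over $k=0,1,\dots,K-1$; because the correction term $\gamma(1-\alpha_i^t)\Delta_t$ is constant over the inner loop, the update telescopes and gives
\begin{equation*}
\Delta_i^t = \mathbf{w}_{i,0}^t - \mathbf{w}_{i,K}^t = \eta_l\sum_{k=0}^{K-1} g_{i,k}^t + \eta_l K\gamma(1-\alpha_i^t)\Delta_t.
\end{equation*}
This cleanly separates each client's accumulated gradient into a genuine stochastic-descent part and a global-correction part, the latter independent of $k$, which is exactly the structure the lemma needs.

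Next I would average these per-client quantities over the $N$ clients using the normalized mean aggregation $\Delta_{t+1}=\tfrac{1}{KN\eta_l}\sum_{i\in\mathcal{N}}\Delta_i^t$ (option $(i)$ of the FedAvg-style aggregation line in Algorithm~\ref{alg:summary}). Dividing by $KN\eta_l$ and splitting the sum yields
\begin{align*}
\Delta_{t+1} &= \frac{1}{KN}\sum_{i\in\mathcal{N}}\sum_{k=0}^{K-1} g_{i,k}^t + \frac{\gamma}{N}\sum_{i\in\mathcal{N}}(1-\alpha_i^t)\,\Delta_t.
\end{align*}
The first term is precisely $\Tilde{\Delta}_t$ by its definition \eqref{eq:tilde_delta_t}, and the averaged factor in the second term is $1-\alpha_t$ by the definition \eqref{eq:alpha_t} of the average correction coefficient. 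Taking $\gamma=1$ (as used in the analysis) collapses the second term to $(1-\alpha_t)\Delta_t$, which establishes $\Delta_{t+1}=\Tilde{\Delta}_t+(1-\alpha_t)\Delta_t$ and exhibits $\Delta_t$ as an exponential moving average of the $\Tilde{\Delta}_\tau$'s.

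The only real subtlety I anticipate is the choice of aggregation. The symmetric EMA form holds for the unweighted mean above, whereas the deployed TACO aggregation \eqref{eq:taco_aggregate} uses the $\alpha_i^{t+1}$-weighted mean; for the latter the coefficient of $\Delta_t$ would instead be $\sum_i\alpha_i^{t+1}(1-\alpha_i^t)\big/\sum_i\alpha_i^{t+1}$ rather than the clean average $1-\alpha_t$. I therefore expect the main work to be a short remark making explicit that the convergence analysis is carried out with the averaged aggregation (so that the momentum interpretation is exact), together with a careful check that the normalization constants $\eta_l$, $K$, and $N$ cancel exactly as claimed. Once this recursion is in hand, it can be iterated to rewrite the model sequence in momentum form, which is presumably what the subsequent theorem and the definition of $\mathbf{z}_t$ in \eqref{eq:zt} exploit.
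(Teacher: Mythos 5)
Your proof is correct and follows essentially the same route as the paper's: telescope the local update rule \eqref{eq:local_update_TACO} over the $K$ inner steps, average the resulting $\Delta_i^t$ uniformly as $\Delta_{t+1}=\tfrac{1}{KN\eta_l}\sum_{i\in\mathcal{N}}\Delta_i^t$, and identify the two terms with $\Tilde{\Delta}_t$ and $(1-\alpha_t)\Delta_t$ under the convention $\gamma=1$ adopted in the analysis. Your closing ``subtlety'' is also exactly the paper's own resolution: the appendix sets $\gamma=1$ and carries out the analysis with the unweighted mean aggregation, so the clean EMA recursion holds as stated.
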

\vspace{-5mm}
\begin{lem}[Update rule of the final model output {$\mathbf{z}_t$}]
	\begin{equation}
 		\mathbf{z}_{t+1} = \mathbf{z}_t - \eta_g\Tilde{\Delta}_t.
	\end{equation}
 \label{lem:z_t}
\end{lem}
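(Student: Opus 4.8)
The plan is to recognize Lemma~\ref{lem:z_t} as the familiar ``momentum-to-plain-SGD'' change of variables and to prove it by direct substitution, with Lemma~\ref{lem:Delta_t} supplying the crucial identity. The auxiliary iterate $\mathbf{z}_t$ from \eqref{eq:zt} is engineered so that the accumulated correction (momentum) term is absorbed, leaving a clean descent step; this lemma simply makes that cancellation explicit, so that downstream we may analyze $\mathbf{z}_t$ as if it were produced by vanilla SGD on the averaged mini-batch gradient $\Tilde{\Delta}_t$.

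First I would convert consecutive global models into gradient steps. The aggregation rule $\mathbf{w}_{t+1} = \mathbf{w}_t - \eta_g \Delta_{t+1}$ from \eqref{eq:aggregation} gives $\mathbf{w}_{t+1} - \mathbf{w}_t = -\eta_g \Delta_{t+1}$ and, shifting the index, $\mathbf{w}_t - \mathbf{w}_{t-1} = -\eta_g \Delta_t$. Hence every model displacement appearing in the definition \eqref{eq:zt} of $\mathbf{z}_t$ and $\mathbf{z}_{t+1}$ can be rewritten purely in terms of the aggregated gradients $\Delta_t$ and $\Delta_{t+1}$.

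Next I would insert Lemma~\ref{lem:Delta_t}, namely $\Delta_{t+1} = \Tilde{\Delta}_t + (1-\alpha_t)\Delta_t$, into $\mathbf{w}_{t+1} = \mathbf{w}_t - \eta_g\Delta_{t+1}$. The key observation is that the resulting term $-\eta_g(1-\alpha_t)\Delta_t$ equals $(1-\alpha_t)(\mathbf{w}_t - \mathbf{w}_{t-1})$, which is exactly the momentum term carried by $\mathbf{z}_t$ in \eqref{eq:zt}; grouping it with $\mathbf{w}_t$ reconstitutes $\mathbf{z}_t$ and yields the clean intermediate relation $\mathbf{w}_{t+1} = \mathbf{z}_t - \eta_g\Tilde{\Delta}_t$. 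To reach the stated recursion I would then expand $\mathbf{z}_{t+1} = \mathbf{w}_{t+1} + (1-\alpha_{t+1})(\mathbf{w}_{t+1}-\mathbf{w}_t)$, substitute the displacement identities once more, and show that the correction term introduced at step $t+1$ telescopes against the one already folded into $\mathbf{z}_t$, so that only $-\eta_g\Tilde{\Delta}_t$ survives.

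I expect the main obstacle to be the bookkeeping of the $(1-\alpha)$-weighted correction terms across the index shift from $t$ to $t+1$: one has to verify that the momentum contribution appearing in $\mathbf{z}_{t+1}$ cancels precisely with the one absorbed into $\mathbf{z}_t$, leaving no residual $\Delta_{t+1}$ factor and a unit coefficient on $\eta_g\Tilde{\Delta}_t$. This is exactly the step where Lemma~\ref{lem:Delta_t} is indispensable, and where a stray sign or a mismatch between $\alpha_t$ and $\alpha_{t+1}$ would most easily slip in, so I would keep the coefficient on the auxiliary displacement in \eqref{eq:zt} aligned with the momentum weight in Lemma~\ref{lem:Delta_t} throughout the manipulation.
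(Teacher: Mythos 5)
Your first two steps are correct and follow essentially the same route as the paper's own derivation (expand the definition of $\mathbf{z}_t$, rewrite displacements as $\mathbf{w}_{t+1}-\mathbf{w}_t=-\eta_g\Delta_{t+1}$ and $\mathbf{w}_t-\mathbf{w}_{t-1}=-\eta_g\Delta_t$, then substitute Lemma~\ref{lem:Delta_t}); in particular your intermediate relation $\mathbf{w}_{t+1}=\mathbf{z}_t-\eta_g\Tilde{\Delta}_t$ is exactly right. The genuine gap is in your final step: the telescoping you are counting on does not occur. Once the momentum stored in $\mathbf{z}_t$ has been consumed to produce $\mathbf{w}_{t+1}=\mathbf{z}_t-\eta_g\Tilde{\Delta}_t$, there is nothing left for the new correction term in $\mathbf{z}_{t+1}$ to cancel against. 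Expanding Definition~\eqref{eq:zt} at round $t+1$ and substituting your own identities gives
\[
\mathbf{z}_{t+1}=\mathbf{w}_{t+1}+(1-\alpha_{t+1})(\mathbf{w}_{t+1}-\mathbf{w}_t)
=\mathbf{z}_t-\eta_g\Tilde{\Delta}_t-\eta_g(1-\alpha_{t+1})\Delta_{t+1},
\]
and the residual $-\eta_g(1-\alpha_{t+1})\Delta_{t+1}$ vanishes only if $\alpha_{t+1}=1$ or $\Delta_{t+1}=\mathbf{0}$, neither of which holds in general (TACO's coefficients satisfy $\alpha_i^{t+1}<1$). No index bookkeeping repairs this: using $\alpha_t$ instead of $\alpha_{t+1}$ in the expansion leaves the analogous leftover $-\eta_g(1-\alpha_t)\Delta_{t+1}$.

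You should also know that this obstacle is not a failure of your execution: the paper's own line-by-line derivation of this lemma proceeds exactly along your plan and silently discards this same term between two consecutive lines, which is presumably why only a proof sketch appears in the body. An exact identity of this form requires the FedCM-style calibration, in which the auxiliary iterate carries coefficient $\frac{1-\alpha}{\alpha}$ (not $1-\alpha$) and the server update is the true exponential moving average $\Delta_{t+1}=\alpha\Tilde{\Delta}_t+(1-\alpha)\Delta_t$ with $\alpha$ constant across rounds; then
\[
\mathbf{z}_{t+1}-\mathbf{z}_t=-\tfrac{\eta_g}{\alpha}\left(\Delta_{t+1}-(1-\alpha)\Delta_t\right)=-\eta_g\Tilde{\Delta}_t
\]
holds exactly. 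With the definitions actually used here (coefficient $1$ on $\Tilde{\Delta}_t$ in Lemma~\ref{lem:Delta_t}, coefficient $1-\alpha_t$ in Eq.~\eqref{eq:zt}, and round-varying $\alpha_t$), the exact statement available is precisely your intermediate relation $\mathbf{w}_{t+1}=\mathbf{z}_t-\eta_g\Tilde{\Delta}_t$; the lemma as stated holds only up to the extra term $\eta_g(1-\alpha_{t+1})\Delta_{t+1}$, which a rigorous proof must either eliminate by redefining $\mathbf{z}_t$ or carry explicitly into the downstream analysis.
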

\vspace{-5mm}




\vspace{-1em}
\begin{proof}[Proof Sketch]
    Lemma \ref{lem:Delta_t} can be derived by the definition of $\Delta_t$ in Eq.~\eqref{eq:taco_aggregate} and local update rules in Eq.~\eqref{eq:local_update_TACO}. Lemma \ref{lem:z_t} can be derived by the definition of $\mathbf{z}_t$ in Eq.~\eqref{eq:zt} and Lemma \ref{lem:Delta_t}. 
\end{proof}
\begin{thm}[Error bound with over-correction term and tailored correction coefficients $\alpha_i^t$]
\label{thm:1}
	Suppose that the loss functions satisfy Assumptions \ref{asmp1}--\ref{asmp3}. Given fixed learning rates $\eta_g$ and $\eta_l$, the expected global loss 
 after $t+1$ aggregation rounds with $K$ local updates per round $\mathbb{E}\left[f(\mathbf{z}_{t+1})\right]$ is at most:
\begin{equation*}
\mathbb{E}\left[f(\mathbf{z}_{t})\right]-\frac{\eta_{g}}{2}\mathbb{E}\left[\|\nabla f(\mathbf{z}_{t})\|^{2}\right]+\frac{L}{2}\eta_{g}^{2}\mathbb{E}\left[\|\tilde{\Delta}_{t}\|^{2}\right]+\eta_{g}L^{2}\varepsilon_{t}+ \eta_g^3Y_t, 
\end{equation*}
where $Y_t = \frac{L^2G^2}{K^2N^4\eta_l^2}\left(\sum_{i\in\mathcal{N}}(1-\alpha_i^t)\sum_{i\in\mathcal{N}}\frac{\mu_i}{c_i} \right)^{2}$ is the term caused by the correction coefficient $\alpha_i^t$ of each client $i$, and $\varepsilon_{t}=\frac1{KN}\sum_{i\in[N],k\in[k]}\mathbb{E}\left[\|\mathbf{w}_{t}-\mathbf{w}_{i,k}^{t}\|^{2}\right]$. 
\end{thm}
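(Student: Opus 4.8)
The plan is to run a smoothness-based descent argument, but on the auxiliary sequence $\mathbf{z}_t$ defined in \eqref{eq:zt} rather than on the aggregated iterate $\mathbf{w}_t$, so that Lemma~\ref{lem:z_t} turns the momentum-style recursion into a clean SGD update $\mathbf{z}_{t+1} = \mathbf{z}_t - \eta_g\tilde{\Delta}_t$. Since $f$ is a convex combination of the $L$-smooth functions $f_i$, it is itself $L$-smooth, so I would first apply the descent lemma to $f$ at $\mathbf{z}_t$ and substitute the update rule of Lemma~\ref{lem:z_t}:
\begin{equation*}
f(\mathbf{z}_{t+1}) \leq f(\mathbf{z}_t) - \eta_g\langle \nabla f(\mathbf{z}_t), \tilde{\Delta}_t\rangle + \tfrac{L}{2}\eta_g^2\|\tilde{\Delta}_t\|^2.
\end{equation*}
Taking expectation over the mini-batch sampling and writing $\bar g_t := \mathbb{E}[\tilde{\Delta}_t] = \tfrac{1}{KN}\sum_{i,k}\mathbb{E}[\nabla f_i(\mathbf{w}_{i,k}^t)]$ reproduces the $\tfrac{L}{2}\eta_g^2\mathbb{E}[\|\tilde{\Delta}_t\|^2]$ term verbatim; the work is then concentrated in the cross term.

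For the cross term I would use the polarization identity $-\langle a,b\rangle = \tfrac12\|a-b\|^2 - \tfrac12\|a\|^2 - \tfrac12\|b\|^2$ with $a = \nabla f(\mathbf{z}_t)$ and $b = \bar g_t$. This yields the descent term $-\tfrac{\eta_g}{2}\mathbb{E}[\|\nabla f(\mathbf{z}_t)\|^2]$ exactly, a nonpositive term $-\tfrac{\eta_g}{2}\|\bar g_t\|^2$ that I would drop, and a residual $\tfrac{\eta_g}{2}\mathbb{E}[\|\nabla f(\mathbf{z}_t)-\bar g_t\|^2]$. I would split this residual with $\|x+y\|^2\le 2\|x\|^2+2\|y\|^2$ into (i) $\|\nabla f(\mathbf{z}_t)-\nabla f(\mathbf{w}_t)\|^2$ and (ii) $\|\nabla f(\mathbf{w}_t)-\bar g_t\|^2$. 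Part (ii), using $\nabla f(\mathbf{w}_t) = \tfrac1N\sum_i\nabla f_i(\mathbf{w}_t)$, Jensen, and $L$-smoothness, is at most $L^2\varepsilon_t$, producing the $\eta_g L^2\varepsilon_t$ term and capturing the usual client-drift error. Part (i) is where the over-correction enters: $L$-smoothness bounds it by $L^2\|\mathbf{z}_t-\mathbf{w}_t\|^2$, and by \eqref{eq:zt} together with $\mathbf{w}_t-\mathbf{w}_{t-1}=-\eta_g\Delta_t$ we have $\|\mathbf{z}_t-\mathbf{w}_t\| = \eta_g(1-\alpha_t)\|\Delta_t\|$, so this contributes a factor $\eta_g^2(1-\alpha_t)^2\|\Delta_t\|^2$ and, multiplied by the outer $\eta_g$, explains the cubic dependence $\eta_g^3$ in $Y_t$.

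The remaining task — and the step I expect to be the main obstacle — is to convert $\eta_g^3 L^2(1-\alpha_t)^2\|\Delta_t\|^2$ into the stated $\eta_g^3 Y_t$. First I would combine the two parts of Assumption~\ref{asmp2}: the cosine bound \eqref{eq:asmp2-2} gives $(\nabla f(\mathbf{w}_t))^\top\mathbb{E}[\Delta_i^t] \ge c_i\|\nabla f(\mathbf{w}_t)\|\,\|\mathbb{E}[\Delta_i^t]\|$, which combined with \eqref{eq:asmp2-1} yields the per-client magnitude bound $\|\mathbb{E}[\Delta_i^t]\| \le \tfrac{\mu_i}{c_i}\|\nabla f(\mathbf{w}_t)\| \le \tfrac{\mu_i}{c_i}G$ after invoking Assumption~\ref{asmp3}. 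Aggregating the local gradients through the aggregation rule and applying the triangle inequality then gives $\|\mathbb{E}[\Delta_t]\| \le \tfrac{G}{KN\eta_l}\sum_i\tfrac{\mu_i}{c_i}$, while $1-\alpha_t = \tfrac1N\sum_i(1-\alpha_i^t)$ by \eqref{eq:alpha_t} supplies the factor $(\sum_i(1-\alpha_i^t))^2/N^2$; substituting both collects exactly into $\eta_g^3 Y_t$. The delicate points are (a) correctly accounting for the $\tfrac{1}{KN\eta_l}$ normalization so that the powers of $N$, $K$ and $\eta_l$ match $Y_t$, and (b) the fact that $\mathbb{E}[\tilde{\Delta}_t]$ involves gradients evaluated at the drifted, randomly-shifted local iterates $\mathbf{w}_{i,k}^t$ rather than at $\mathbf{w}_t$, so the conditioning must be arranged so that Assumption~\ref{asmp2} (which the remark notes is weaker than unbiasedness) can legitimately be applied to $\mathbb{E}[\Delta_i^t]$. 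Assembling the descent term, the $\eta_g L^2\varepsilon_t$ drift term, the smoothness term, and $\eta_g^3 Y_t$ then completes the bound.
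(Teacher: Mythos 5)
Your proposal follows essentially the same route as the paper's proof: the descent lemma applied to $f$ at $\mathbf{z}_t$ via Lemma~\ref{lem:z_t}, Young's inequality on the cross term (your polarization identity with the dropped $-\tfrac{\eta_g}{2}\|\bar g_t\|^2$ is the identical step) to extract $-\tfrac{\eta_g}{2}\mathbb{E}[\|\nabla f(\mathbf{z}_t)\|^2]$, a split through the intermediate point $\mathbf{w}_t$ yielding the drift term $\eta_g L^2\varepsilon_t$ and the over-correction term $\eta_g^3 L^2(1-\alpha_t)^2\|\Delta_t\|^2$, and finally Assumption~\ref{asmp2} combined with Assumption~\ref{asmp3}, the aggregation normalization $\tfrac{1}{KN\eta_l}$, and $1-\alpha_t=\tfrac{1}{N}\sum_i(1-\alpha_i^t)$ to collect $\eta_g^3 Y_t$. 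The two delicate points you flag (the conditioning needed to apply Assumption~\ref{asmp2} to $\mathbb{E}[\Delta_i^t]$, and the gap between second moments of $\Delta_t$ and the norm of its expectation) are handled no more rigorously in the paper's own proof, so your attempt matches it in both structure and level of rigor.
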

\begin{cor}[Convergence rate with tailored correction coefficients $\alpha_i^t$]
Suppose that the loss functions satisfy Assumptions \ref{asmp1}--\ref{asmp3}, the convergence rate of FL with tailored correction coefficients $\alpha_i^t$ under non-convex loss function assumptions is:
\begin{equation}
    \frac{1}{T}\sum_{t=1}^{T}\mathbb{E}\left[\|\nabla f(\mathbf{z}_{t})\|^{2}\right] = \mathcal{O}\left(\sqrt{\frac{L}{T}}+\sqrt[3]{\frac{Y}{T^2}}\right).
    \label{eq:convergence_rate}
\end{equation}
where $Y = \mathop{\max}_{t\in[T]} Y_t$ is the term caused by the correction coefficient $\alpha_i^t$ of every client $i$. Especially, when $\alpha_i^t=\alpha_{i'}^t,~\forall i\neq i'$, Corollary \ref{cor:converge_rate} is consistent with the convergence rate in prior work~\cite{xu2021fedcm}.
\label{cor:converge_rate}
\end{cor}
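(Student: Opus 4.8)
The plan is to derive Corollary~\ref{cor:converge_rate} directly from the one-step descent inequality of Theorem~\ref{thm:1} by a standard telescoping argument followed by a careful balancing of the learning rates $\eta_g$ and $\eta_l$. First I would rearrange the bound of Theorem~\ref{thm:1} to isolate the gradient-norm term, obtaining for each round $t$
\begin{equation*}
\frac{\eta_g}{2}\mathbb{E}\left[\|\nabla f(\mathbf{z}_t)\|^2\right] \leq \mathbb{E}[f(\mathbf{z}_t)] - \mathbb{E}[f(\mathbf{z}_{t+1})] + \frac{L}{2}\eta_g^2\mathbb{E}\left[\|\tilde\Delta_t\|^2\right] + \eta_g L^2 \varepsilon_t + \eta_g^3 Y_t.
\end{equation*}
Summing over $t=1,\dots,T$ telescopes the first two terms into $f(\mathbf{z}_1) - \mathbb{E}[f(\mathbf{z}_{T+1})]$, and lower-bounding $f(\mathbf{z}_{T+1})$ by the global optimum $f^* = f(\mathbf{w}^*)$ turns this into the constant $f(\mathbf{z}_1) - f^*$. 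Dividing through by $\eta_g T/2$ then yields an averaged bound consisting of an optimization-error term of order $1/(\eta_g T)$ together with three residual terms.

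Next I would bound the residual terms using Assumptions~\ref{asmp1}--\ref{asmp3}. The quantity $\mathbb{E}[\|\tilde\Delta_t\|^2]$ is controlled by a constant multiple of $G^2$ via the bounded-gradient Assumption~\ref{asmp3} (together with a standard bounded second-moment condition on the stochastic gradients), contributing a term of order $L\eta_g$. The drift term $\varepsilon_t = \frac{1}{KN}\sum_{i,k}\mathbb{E}[\|\mathbf{w}_t - \mathbf{w}_{i,k}^t\|^2]$ measures the deviation of the local iterates from the synchronized model; unrolling the local update rule~\eqref{eq:local_update_TACO} over at most $K$ steps and applying Assumption~\ref{asmp3} bounds it by $\mathcal{O}(\eta_l^2 K^2 G^2)$. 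Finally $\frac{1}{T}\sum_t Y_t \leq Y := \max_{t} Y_t$ by definition. Collecting everything gives
\begin{equation*}
\frac{1}{T}\sum_{t=1}^{T}\mathbb{E}\left[\|\nabla f(\mathbf{z}_t)\|^2\right] \leq \frac{2(f(\mathbf{z}_1)-f^*)}{\eta_g T} + \mathcal{O}(L\eta_g) + \mathcal{O}(L^2\eta_l^2 K^2) + \mathcal{O}(\eta_g^2 Y).
\end{equation*}

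The key step is the learning-rate selection. I would first choose $\eta_l$ on the order of $1/(K\sqrt{T})$ so that the client-drift contribution $L^2\eta_l^2 K^2 = \mathcal{O}(L^2/T)$ is dominated by the leading terms for large $T$. Then I would pick $\eta_g$ to balance the optimization-error term $1/(\eta_g T)$ simultaneously against the smoothness term $L\eta_g$ and the correction term $\eta_g^2 Y$: balancing $1/(\eta_g T)$ with $L\eta_g$ gives $\eta_g \sim 1/\sqrt{LT}$ and a rate of $\sqrt{L/T}$, while balancing $1/(\eta_g T)$ with $\eta_g^2 Y$ gives $\eta_g \sim (TY)^{-1/3}$ and a rate of $\sqrt[3]{Y/T^2}$. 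Taking $\eta_g$ as the minimum of these two candidate values makes each term no larger than the corresponding rate, yielding the stated bound $\mathcal{O}(\sqrt{L/T} + \sqrt[3]{Y/T^2})$.

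The main obstacle I anticipate is the cube-root term, which is the non-standard part of the rate and traces back to the $\eta_g^3 Y_t$ contribution in Theorem~\ref{thm:1}. Because $Y_t$ enters cubically in $\eta_g$ rather than quadratically, the usual square-root learning-rate tuning no longer renders this term subdominant; one must instead tune $\eta_g$ so that the optimization-error term and the cubic term are balanced, producing the $T^{-2/3}$ scaling. Care is also needed to verify that a single fixed choice of $\eta_g$ (the minimum of the two candidate values) controls all three competing terms at once, rather than each requiring a different scaling. The final consistency remark -- that uniform coefficients $\alpha_i^t = \alpha_{i'}^t$ recover the FedCM rate of~\cite{xu2021fedcm} -- then follows by substituting equal $\alpha_i^t$ into the definition of $Y_t$ and matching the resulting expression term by term.
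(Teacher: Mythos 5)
Your proposal is correct and arrives at the stated rate, but it is a more self-contained version of the paper's argument rather than a literal reproduction of it. Both proofs start identically, by rearranging the one-step bound of Theorem~\ref{thm:1} to isolate $\mathbb{E}\left[\|\nabla f(\mathbf{z}_t)\|^2\right]$. At that point the paper simply invokes Lemma~B.6 of \cite{xu2021fedcm} as a black box: for any sequence of the form $a_t = \frac{1}{\eta}(b_t-b_{t+1}) + c_1\eta + c_2\eta^2$ there exists a step size $\eta$ with $\frac{1}{T}\sum_t a_t = \mathcal{O}\bigl(\sqrt{b_0c_1/T} + \sqrt[3]{b_0^2c_2/T^2}\bigr)$; identifying the $\eta$-linear coefficient with the $L\|\tilde{\Delta}_t\|^2$ term and the $\eta^2$ coefficient with $2Y$ yields the corollary in one line. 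What you do instead --- telescoping the $f(\mathbf{z}_t)$ differences, lower-bounding $f(\mathbf{z}_{T+1})$ by $f^*$, and choosing $\eta_g$ as the minimum of the two candidate step sizes $1/\sqrt{LT}$ and $(TY)^{-1/3}$ --- is precisely the proof of that lemma written out explicitly, so the mathematical content coincides; you have unpacked the citation rather than replaced it. Your version buys transparency: it shows exactly why the $\eta_g^3 Y_t$ term forces the $T^{-2/3}$ scaling, and it surfaces two points the paper's proof silently glosses over, namely (i) that bounding $\mathbb{E}\left[\|\tilde{\Delta}_t\|^2\right]$ needs a second-moment condition on the stochastic gradients that does not follow from Assumption~\ref{asmp3} (which bounds only the true gradient), and (ii) that the drift term $2L^2\varepsilon_t$, which the paper's rearranged inequality retains but never disposes of, must be eliminated either by your separate choice $\eta_l \sim 1/(K\sqrt{T})$ or by tying $\eta_l$ to $\eta_g$ (as in the paper's setting $\eta_g = K\eta_l$) so that it folds into the $\eta^2$ slot of the lemma. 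The paper's citation-based proof is more compact and defers these details to prior work; yours is more rigorous about the residual terms and would stand on its own.
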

By explicitly presenting the terms determined by correction coefficients, \textit{i.e.} $Y_t$ and $Y$, Theorem \ref{thm:1} and Corollary \ref{cor:converge_rate} provide the first FL error bound and non-convex convergence rate that unveils the prevalence of the over-correction in FL as well as its detrimental effect on the error bound and convergence rate, respectively. Existing methods that utilize uniform correction coefficients $\alpha_i^t$, can easily cause a larger value of $Y_t$ and $Y$ and thus lead to suboptimal model performance. 
Therefore, a tailored correction coefficients design is essential to minimize $Y_t$ and $Y$ to alleviate the negative effect of over-correction. To achieve this, we further present a 
corollary based on Theorem \ref{thm:1} to guide our adaptive adjustment of the correction coefficient $\alpha_i^t$ for each client $i$ in every communication round $t$.

\begin{cor}[Optimal correction factors for different clients]
\label{cor:optimal-alpha}
For some scalar $\sigma > 0$, if $ \sum_{i\in\mathcal{N}}(1-\alpha_i^t)\geq \sigma$, to minimize the $Y_t$ in error bound and $Y$ in convergence rate, the correction coefficient $\alpha_i^t$ of client $i$ in training round $t$ satisfies:
\begin{equation}
    (1-\alpha_i^t)\propto \frac{\mu_i}{c_i},
\end{equation}
where $c_i$ reflects the cosine similarity between the local gradient of client $i$ and the global gradient, and $\mu_i$ 
reflects the magnitude of the local gradient $\mathbb{E}\left[\left\|\Delta_i^t\right\|\right]$ (Assumption \ref{asmp2}).
\end{cor}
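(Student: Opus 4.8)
The plan is to read Corollary~\ref{cor:optimal-alpha} as a constrained optimization problem over the per-client correction factors and to recover the proportional allocation from a first-order (Lagrangian) stationarity condition. Write $\beta_i \triangleq 1-\alpha_i^t$ for the correction factor of client $i$, so the feasible set is $\beta_i\in[0,1]$ together with the budget constraint $\sum_{i\in\mathcal{N}}\beta_i\ge\sigma$ supplied by the hypothesis. The objective is the over-correction term $Y_t$ from Theorem~\ref{thm:1}; since $Y=\max_{t}Y_t$, minimizing each $Y_t$ round by round minimizes $Y$, so it suffices to treat a single round. The prefactor $\frac{L^2G^2}{K^2N^4\eta_l^2}$ is a positive constant independent of the $\beta_i$ and can be discarded, reducing the task to minimizing the bracketed quantity built from $\{\beta_i\}$ and the fixed non-IID constants $\{\mu_i/c_i\}$ of Assumption~\ref{asmp2}.

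First I would make the dependence on the \emph{allocation} of the budget explicit. Because each correction factor $\beta_i$ is coupled in the bound to client $i$'s non-IID penalty $\mu_i/c_i$ (with $\mu_i$ the gradient-magnitude constant and $c_i$ the cosine-similarity constant from Assumption~\ref{asmp2}), I would rewrite the objective so that the residual drift client $i$ still contributes after correction enters as the ratio $(\mu_i/c_i)/\beta_i$: spending more correction $\beta_i$ suppresses that client's leftover drift, while the budget caps the total correction available. This yields an objective of Cauchy--Schwarz type, namely minimizing $\sum_{i\in\mathcal{N}}(\mu_i/c_i)^2/\beta_i$ subject to $\sum_{i\in\mathcal{N}}\beta_i=\sigma$, where the constraint is active at the optimum because the penalty is increasing in the total correction spent.

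The heart of the argument is then a single application of the Cauchy--Schwarz inequality,
\[
\Big(\sum_{i\in\mathcal{N}}\tfrac{\mu_i}{c_i}\Big)^2=\Big(\sum_{i\in\mathcal{N}}\sqrt{\beta_i}\cdot\tfrac{\mu_i/c_i}{\sqrt{\beta_i}}\Big)^2\le\Big(\sum_{i\in\mathcal{N}}\beta_i\Big)\Big(\sum_{i\in\mathcal{N}}\tfrac{(\mu_i/c_i)^2}{\beta_i}\Big),
\]
whose left side and the factor $\sum_i\beta_i$ are fixed by the constraint; equality — hence the minimum of $\sum_i(\mu_i/c_i)^2/\beta_i$ — holds if and only if $\sqrt{\beta_i}\propto(\mu_i/c_i)/\sqrt{\beta_i}$, i.e. $\beta_i\propto\mu_i/c_i$. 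Equivalently, forming the Lagrangian $\sum_i(\mu_i/c_i)^2/\beta_i-\lambda\big(\sum_i\beta_i-\sigma\big)$ and setting $\partial/\partial\beta_i=0$ gives $-(\mu_i/c_i)^2/\beta_i^2=\lambda$, so $\beta_i=(\mu_i/c_i)/\sqrt{-\lambda}$, which is exactly proportional; the positive second derivative $2(\mu_i/c_i)^2/\beta_i^3$ confirms a minimizer. Substituting back recovers $(1-\alpha_i^t)\propto\mu_i/c_i$, matching the magnitude/direction construction of $\alpha_i^t$ in Eq.~\eqref{eq:alpha_compute}.

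The main obstacle I anticipate is not the optimization step — a textbook Cauchy--Schwarz/Lagrange computation — but justifying the reduction of the second paragraph: showing that distributing the fixed correction budget across clients enters $Y_t$ (equivalently the over-correction/drift balance underlying the bound of Theorem~\ref{thm:1}) precisely through the reciprocal combination $\sum_i(\mu_i/c_i)^2/\beta_i$, so that the optimum is a \emph{proportional} allocation rather than a degenerate corner solution. I would also verify feasibility, namely that the proportional values can be rescaled to lie in $[0,1]$ — the role played by $\gamma\in(0,1)$ in Eq.~\eqref{eq:local_update_TACO} and by the normalization factor in Eq.~\eqref{eq:alpha_compute} — and that treating the constraint $\sum_i\beta_i=\sigma$ as active is consistent with $Y_t$ being monotone increasing in the total correction spent.
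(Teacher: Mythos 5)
Your optimization machinery is sound, but it is applied to the wrong objective, and the gap you yourself flag (``justifying the reduction'') is fatal rather than cosmetic. In Theorem~\ref{thm:1} the over-correction term is $Y_t = \frac{L^2G^2}{K^2N^4\eta_l^2}\left(\sum_{i\in\mathcal{N}}(1-\alpha_i^t)\cdot\sum_{i\in\mathcal{N}}\frac{\mu_i}{c_i}\right)^{2}$, a product of two \emph{decoupled} sums: writing $\beta_i=1-\alpha_i^t$, it depends on the allocation $\{\beta_i\}$ only through the total $\sum_i\beta_i$, since $\sum_i\mu_i/c_i$ is a fixed constant. Hence the reciprocal coupling $\sum_i(\mu_i/c_i)^2/\beta_i$ that you posit simply does not exist in $Y_t$; your surrogate is a genuinely different function of $\{\beta_i\}$, and no rewriting recovers it from the paper's $Y_t$ --- indeed your modeling premise has the monotonicity backwards, since in $Y_t$ spending more correction $\beta_i$ \emph{increases} the penalty rather than suppressing a residual drift. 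This also creates an internal inconsistency in your setup: your surrogate is strictly decreasing in each $\beta_i$, so the lower-bound budget $\sum_i\beta_i\geq\sigma$ would never be active for it (its minimizer pushes every $\beta_i$ to its cap); the justification that the constraint binds ``because the penalty is increasing in the total correction spent'' is true of the paper's $Y_t$ but false for the objective you actually optimize, so your Lagrangian quietly flips the constraint into an upper bound.

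The paper's own (admittedly terse) proof works directly with the product-of-sums form: by Cauchy--Schwarz, $\left(\sum_i\sqrt{\beta_i\,\mu_i/c_i}\right)^{2}\leq\left(\sum_i\beta_i\right)\left(\sum_i\mu_i/c_i\right)$, with equality iff $\beta_i\propto\mu_i/c_i$; under the budget constraint, which is active because $Y_t$ increases with $\sum_i\beta_i$, the proportional allocation attains the minimum of $Y_t$ and is the unique allocation making the decoupled bound tight against the coupled per-client drift sum. It is worth noting that the paper's conclusion is itself delicate on exactly the point your surrogate was invented to fix: since $Y_t$ depends only on $\sum_i\beta_i$, \emph{every} allocation with $\sum_i\beta_i=\sigma$ attains the same minimal value $\frac{L^2G^2}{K^2N^4\eta_l^2}\sigma^{2}\left(\sum_i\mu_i/c_i\right)^{2}$, so proportionality is sufficient and is distinguished only as the Cauchy--Schwarz equality case, not as a uniquely necessary condition. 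Your proposal shares that same equality condition at its core, but by substituting a coupled surrogate for the uncoupled $Y_t$ you end up proving optimality for a different optimization problem than the one stated in Corollary~\ref{cor:optimal-alpha}; to repair it you would have to either derive your reciprocal objective from the analysis in Theorem~\ref{thm:1} (which the theorem's bound does not support) or argue, as the paper implicitly does, through tightness of the Cauchy--Schwarz relaxation rather than through strict minimization.
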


Corollary \ref{cor:optimal-alpha} supports TACO's design in the correction coefficient $\alpha_i^t$ of client $i$ in (\ref{eq:alpha_compute}) from a theoretical perspective, which is illustrated in Fig. \ref{fig:corr-cos}. Clients with a larger magnitude of the local updates (larger $\mu_i$) or a lower degree of cosine similarity (smaller $c_i$) with the global gradient should have a smaller correction coefficient $\alpha_i^t$ in (\ref{eq:alpha_compute}) and thus larger correction factor $1-\alpha_i^t$ in \eqref{eq:local_update_TACO} to calibrate their local updates. 
	\section{Experimental Results}
\label{sec:exp}
\subsection{Experiment Setup}
In this section, we assess our TACO against pioneering algorithms for FL training with non-IID data to show its superiority. All experiments are conducted on the server with 12th Gen Intel(R) Core(TM) i9-12900K and NVIDIA GeForce 3090 GPU with 20GB CUDA memory.

\noindent\textbf{Baselines.}
We compare TACO with FedAvg~\cite{mcmahan2017communication}, FedProx~\cite{li2020federated}, FoolsGold~(FG)\cite{fung2020limitations}, Scaffold~\cite{karimireddy2020scaffold}, STEM~\cite{khanduri2021stem} and the latest SOTA algorithm FedACG~\cite{fedacg} using 20 clients with full participation. They represent different technical approaches to addressing the non-IID challenge, as illustrated in Section \ref{sec:reeval}.

\noindent\textbf{Datasets, models, and non-IID settings.} For model architectures, we employ an MLP model with three hidden layers (32, 16, 8) to train on a tabular dataset (adult~\cite{misc_adult_2}). For most image datasets, we adopt a CNN model~\cite{li2022federated} with two 5 × 5 convolutional layers and three fully connected layers with ReLu activation, and we adopt ResNet18~\cite{he2016deep} for the complex dataset CIFAR-100. We adopt LSTM model~\cite{hu2022autofl} for the non-IID word prediction dataset Shakespeare introduced in LEAF~\cite{caldas2018leaf} (a classic FL benchmark), which are all implemented by Pytorch. For non-IID distribution, we focus on the label distribution skew, which is the most challenging non-IID data setting compared to others~\cite{li2022federated}. We adopt the Dirichlet distribution (Dir($\phi$)) with different concentration parameters $\phi$ to simulate various label imbalanced scenarios, which is widely used in a variety of FL studies~\cite{deng2023hierarchical,shi2023distributionally,jiang2023heterogeneity,chaninternal,liu2024can}. We also design a \textbf{synthetic} label distribution comprising three groups of clients with distinct label diversity, as shown in table \ref{tab:average_alpha} in Section \ref{sec:taco_design}, to better simulate the non-IID data distribution in practical FL settings. The detailed information regarding the datasets, models, and non-IID distributions of the experiments has presented in Table \ref{tab:datasets}, unless otherwise specified.
\setlength{\tabcolsep}{3pt}
\begin{table}[t]
\caption{Datasets, models, and non-IID distributions}
\centering
\footnotesize
\begin{tabular}{cccccc}
\toprule
\textbf{Datasets} & \textbf{\begin{tabular}[c]{@{}c@{}}\# of training\\ samples\end{tabular}} & \textbf{\begin{tabular}[c]{@{}c@{}}\# of test\\ samples\end{tabular}} & \textbf{\# of classes} & \textbf{Non-IID} & \textbf{Model} \\ \hline
MNIST            & 60000     & 10000                                                            & 10                 & Synthetic        & CNN    \\
FMNIST            & 60000     & 10000                                                            & 10                 & Synthetic        & CNN            \\ 
FEMNIST            & 341873    & 40832    & 62                 & Dir(0.2)        & CNN            \\ 
SVHN              & 73257                                                                & 26032                                                            & 10                 & Synthetic        & CNN            \\ 
CIFAR10          & 50000                                                                & 10000                                                            & 10                 & Synthetic        & CNN            \\ 
CIFAR100         & 50000       & 10000     & 100           & Dir(0.5)        & ResNet18                 \\ 
adult             & 32561    & 16281     & 2                  & Dir(0.5)        & MLP \\
Shakespeare      & 448340        & 70657  & -                 & LEAF~\cite{caldas2018leaf}        & LSTM            \\ 
\bottomrule
\end{tabular}
\label{tab:datasets}
\vspace{-7mm}
\end{table}

\noindent\textbf{Evaluation metrics.} To evaluate the \textit{round-to-accuracy} performance of all these FL methods, we compare: (i) their model performance after a fixed number of communication rounds and (ii) the number of communication rounds required for each algorithm to reach the same target accuracy. To compare their \textit{time-to-accuracy} performance, we measure: (i) the total computation time required by clients to attain a desired model accuracy and (ii) the computation time required by clients to complete the same number of local updates in each communication round. In the \textit{time-to-accuracy} evaluation, to ensure fair comparison, we assume all FL algorithms are implemented in identical network conditions, and thus we do not take account of the network communication time and compare the local computation time in the \textit{time-to-accuracy} evaluation. In certain scenarios, network transmission time might dominate the total training time. In this case, the number of training (communication) rounds directly determines the overall training time, and the \textit{round-to-accuracy} metric can reflect the training efficiency of different FL algorithms. Therefore, the evaluation of both \textit{round-to-accuracy} and \textit{time-to-accuracy} metrics enables a more comprehensive comparison of training efficiency across different FL algorithms.

To verify TACO's ability to identify freeloaders, we use true positive rate $= \frac{\text{\# of identified freeloaders}}{\text{\# of freeloaders}}$ (\textbf{TPR}) and false positive rate $= \frac{\text{\# of misjudged clients}}{\text{\# of normal clients}}$ (\textbf{FPR}) as evaluation metrics, where a larger value of TPR and a smaller value of FPR represent a better performance. We replace $40\%$ of the clients (8 out of 20) with freeloaders in the experiments shown in Table \ref{tab:average_alpha} and \ref{tab:free_thres}, while all clients are benign in the other experiments.

\noindent\textbf{Hyper-parameter configurations.} The default configuration of all baselines and TACO is as follows: The numbers of communication rounds are $T=50$ for the adult and Shakespeare datasets, $T=100$ for MNIST, FMNIST, FEMNIST, and SVHN datasets, and $T=200$ for the CIFAR-10 and CIFAR-100 dataset. The numbers of local update steps are $K = 100$ for MNIST, FMNIST, FEMNIST, and adult datasets, $K=200$ for CIFAR-100 and Shakespeare, and $K=1000$ for SVHN and CIFAR-10 datasets. We set the mini-batch size as $s=64$ and learning rates as $\eta_l = 0.01$ ($\eta_l=1$ for the Shakespeare dataset), $\eta_g = K\eta_l$ unless other specified, which are consistent with the settings in existing FL research~\cite{fedacg, hu2022autofl}. The unique hyper-parameters of each method are: $\zeta = 0.1$ in FedProx, $\alpha = 1$ in Scaffold, $\alpha_t = 0.2$ in STEM, $\beta = 0.001$ in FedACG, and $\lambda = T/5, \kappa = 0.6, \gamma = \frac{1}{K}$ in TACO.
\setlength{\tabcolsep}{4pt}
\begin{table*}[t]
\caption{Round-to-Accuracy performance of various algorithms across different datasets}
\centering
\begin{footnotesize}
\begin{tabular}{ccccccccccccc}
\hline
Datasets          & \multicolumn{2}{c}{adult}   &  \multicolumn{2}{c}{FMNIST}                                           & \multicolumn{2}{c}{SVHN}                                             & \multicolumn{2}{c}{CIFAR10}                                          & \multicolumn{2}{c}{CIFAR100}  & \multicolumn{2}{c}{Shakespeare}                                                                                   \\ \hline
Methods   & \begin{tabular}[c]{@{}c@{}}Acc(\%)\\ 50R \end{tabular}  & \begin{tabular}[c]{@{}c@{}}Rounds\\ (78\%)\end{tabular}  & \begin{tabular}[c]{@{}c@{}}Acc(\%)\\ 100R \end{tabular}    & \begin{tabular}[c]{@{}c@{}}Rounds\\ (70\%)\end{tabular} & \begin{tabular}[c]{@{}c@{}}Acc(\%)\\ 100R \end{tabular}    & \begin{tabular}[c]{@{}c@{}}Rounds\\ (70\%)\end{tabular} & \begin{tabular}[c]{@{}c@{}}Acc(\%)\\ 200R \end{tabular}    & \begin{tabular}[c]{@{}c@{}}Rounds\\ (50\%)\end{tabular} & \begin{tabular}[c]{@{}c@{}}Acc(\%)\\ 200R \end{tabular}    & \begin{tabular}[c]{@{}c@{}}Rounds\\ (54\%)\end{tabular} & \begin{tabular}[c]{@{}c@{}}Acc(\%)\\ 50R \end{tabular}   & \begin{tabular}[c]{@{}c@{}}Rounds\\ (50\%)\end{tabular} \\ \hline
FedAvg    & 78.88±0.06 & 24                                                        & 71.36±0.06 & 85                                                      & 73.97±2.46 & 37                                                      & 51.84±0.76 & 120                                                      & 54.39±0.08 & 198  & 50.54±0.08 & 45                                                                                                         \\
FedProx   & 78.92±0.03 & 24                                                       & 67.14±0.11 & 100+                                                      & 36.69±0.00 & $\times$                                                      & 49.95±0.39 & 200+                                                      & 53.32±0.13 & 200+  & 28.07±0.19 & $\times$                                                                                                           \\
FoolsGold & 83.50±0.16 & 20                                                        & 73.09±0.16 & 80                                                      & 76.01±0.09 & 21                                                      & 53.70±0.51 & 58                                                      & 54.19±0.52 & 200  & 50.49±0.13 & 45                                                                                                        \\
Scaffold  & 82.24±0.90 & 10                                                        & 68.84±0.14 & 100+                                                      & 19.58±0.00 & $\times$                                                      & 43.18±2.69 & $\times$                                                      & 53.33±0.17 & 200+  & 51.95±0.09 & 37                                                                                                           \\
STEM      & 78.83±0.09 & 22                                                       & 72.97±0.03 & 69                                                      & 74.55±2.15 & \textbf{16}                                                      & 52.01±1.30 & 112                                                      & 54.69±0.37 & 191   & 52.57±0.12 & 34                                                                                                        \\
FedACG    & 83.69±0.08 & 11                                                        & 70.91±0.27 & 92                                                      & 75.51±0.17 & 36                                                      & 52.81±0.64 & 116                                                      & 54.07±0.07 & 197   & 42.91±0.05 & $\times$                                                                                                        \\
\textbf{TACO}  & \textbf{83.80±0.06} & \textbf{8}                                                          & \textbf{73.28±0.16} & \textbf{59}                                                      & \textbf{78.26±0.85} & 17                                                      & \textbf{54.98±0.24} & \textbf{44}                                                      & \textbf{56.08±0.26} & \textbf{170}  & \textbf{53.55±0.06} & \textbf{22}                                                                                                      \\ \hline
\end{tabular}
\end{footnotesize}
\label{tab:all_compare}
\vspace{-0.5em}
\end{table*}
\begin{figure*}[t]
    \centering
    \includegraphics[width=0.95\linewidth]{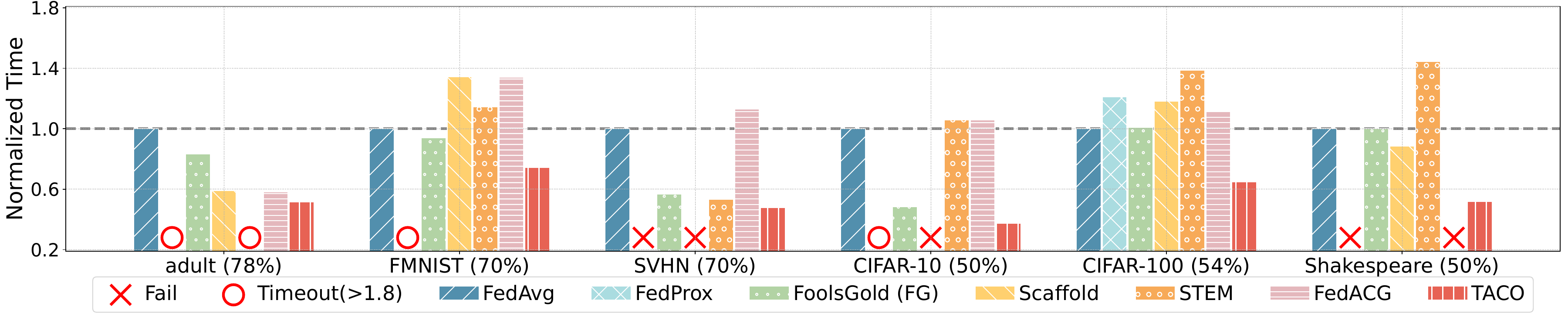}
    \caption{Cumulative local training time required by different algorithms to achieve the target accuracy.}
    \label{fig:time-to-acc}
    \vspace{-1.5em}
\end{figure*}

\begin{figure}
	\centering
	\begin{subfigure}[t]{0.49\linewidth}
		\includegraphics[width=\textwidth]{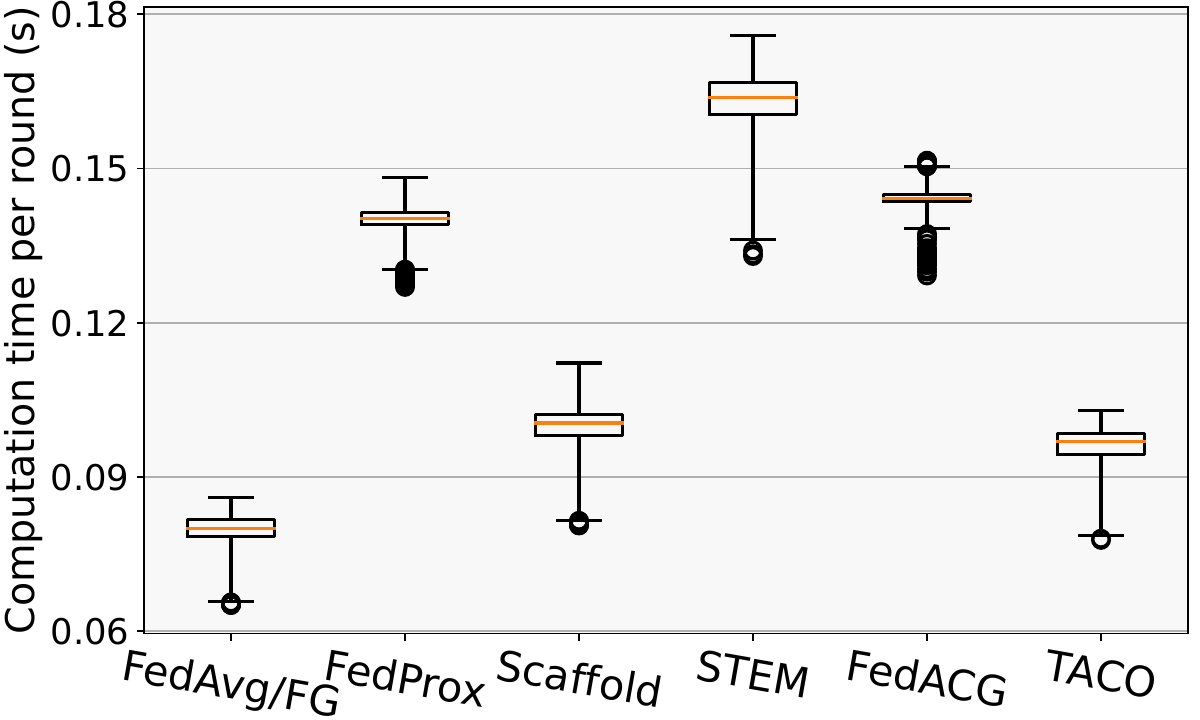}
		\caption{adult-MLP}
	\end{subfigure}
	\begin{subfigure}[t]{0.49\linewidth}
		\includegraphics[width=\textwidth]{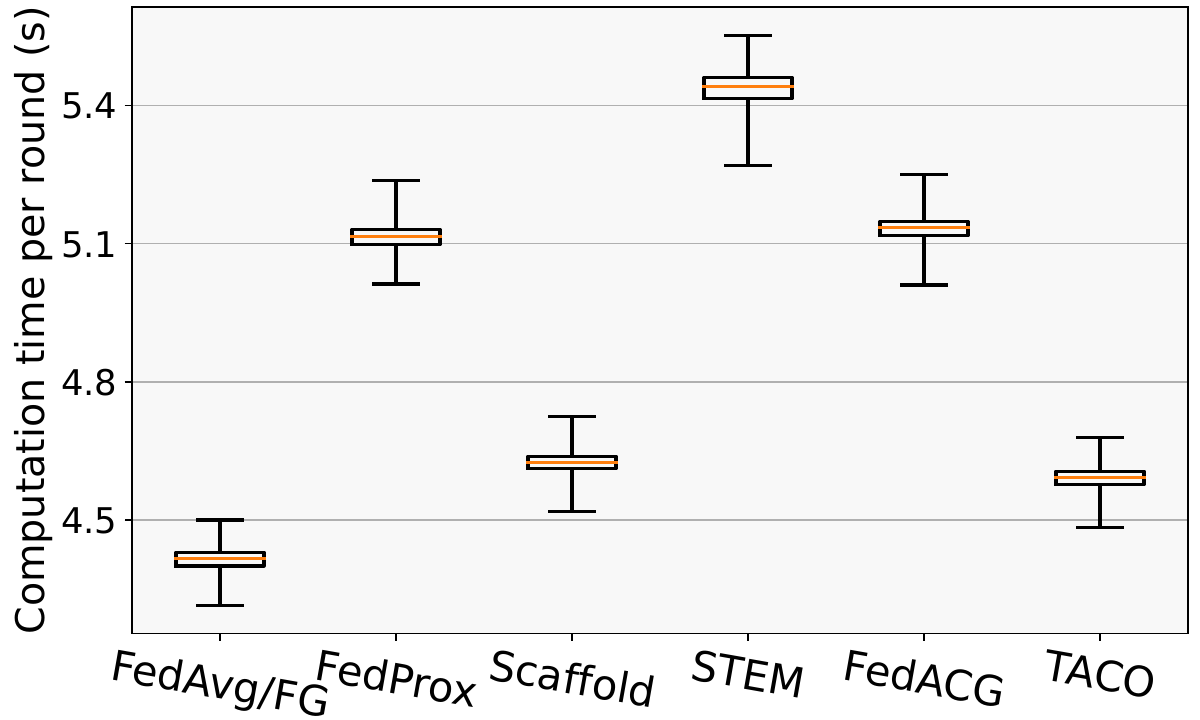}
		\caption{SVHN-CNN}
	\end{subfigure}
        \begin{subfigure}[t]{0.49\linewidth}
		\includegraphics[width=\textwidth]{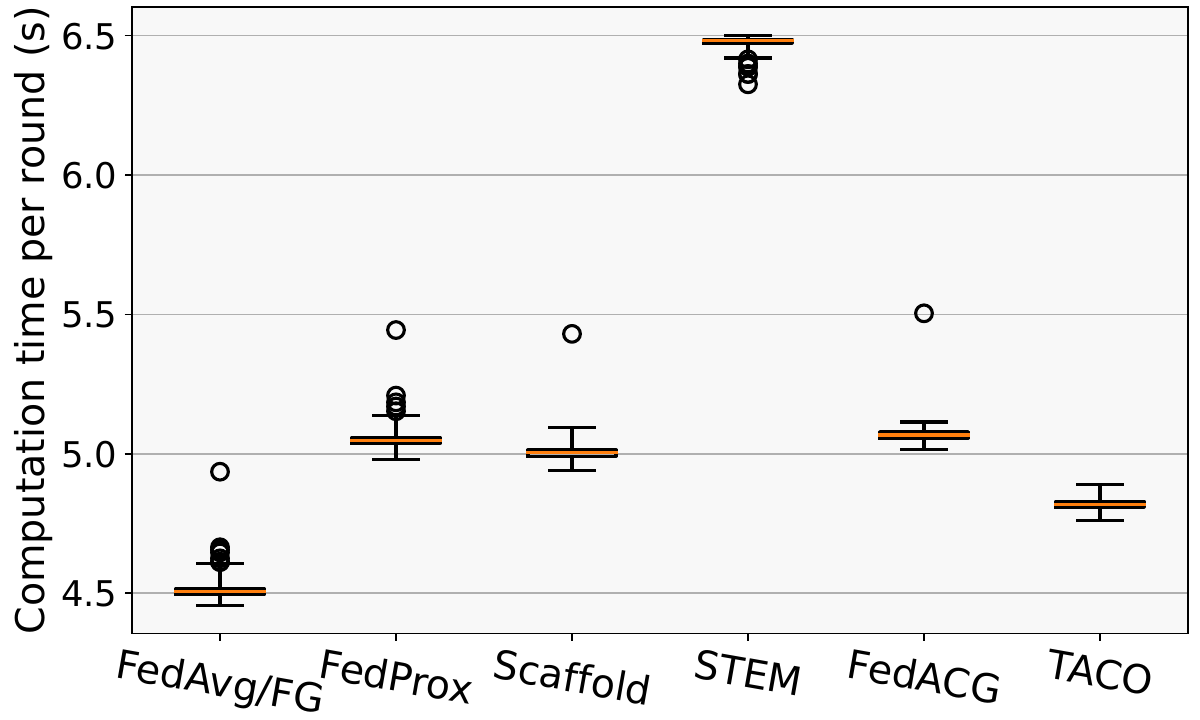}
		\caption{CIFAR100-ResNet}
	\end{subfigure}
        \begin{subfigure}[t]{0.49\linewidth}
		\includegraphics[width=\textwidth]{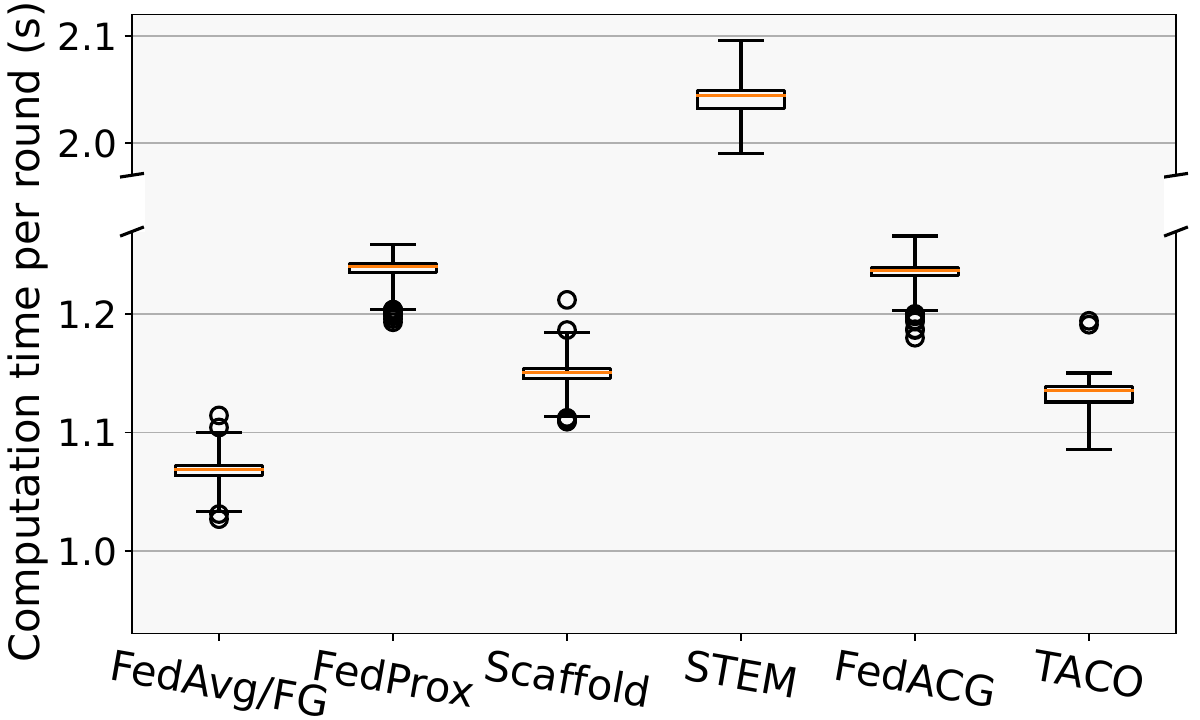}
		\caption{Shakespeare-LSTM}
	\end{subfigure}
	\caption{Local computation time for clients in every FL round. (The orange bars are the median across all training rounds.)}
	\label{fig:compute_time}
\end{figure}
\subsection{Round-to-accuracy Performance}

Table \ref{tab:all_compare} shows that TACO outperforms the baselines in six different datasets after performing the same number of training rounds. {\bf TACO} can obtain up to 2.76\%--58.68\% final test accuracy improvement across diverse datasets and non-IID settings, indicating the superiority of our tailored correction design. It also shows that other FL methods have little improvement or even worse model performance compared to FedAvg due to the over-correction. \textbf{FedProx} and \textbf{Scaffold} even fail to achieve model convergence on the SVHN dataset, supporting Section \ref{sec:reeval}. We further conduct experiments, where we refine FedProx and Scaffold by replacing their coefficients $\zeta$ and $\alpha$ with our tailored correction coefficients $\alpha_i^t$. 
Fig. \ref{fig:extension} shows performance improvements of such integration, highlighting a broader perspective that client-specific corrections are of great significance in non-IID data training. 

\subsection{Time-to-accuracy Performance}
We record the actual local computation time of the slowest client in every FL round in Fig.~\ref{fig:compute_time}. We then sum them up to compare the total client computation time of different algorithms to reach the target accuracy in Fig.~\ref{fig:time-to-acc}.
Fig.~\ref{fig:compute_time} shows that most FL methods, except FoolsGold, suffer from a significantly longer computation time due to their required extra computation by clients. This explains why algorithms like STEM can often achieve better round-to-accuracy performance but worse time-to-accuracy performance than FedAvg, as shown in Fig. \ref{fig:time-to-acc}. STEM algorithm can require up to over 80\% more local computation time (defined as Timeout) on the client side than FedAvg to meet the same model performance, despite a fewer number of training rounds needed. Besides, other baseline methods like FedProx and Scaffold sometimes require more training rounds than FedAvg or even fail to reach the target accuracy due to over-correction, which can also lead to convergence failure ($\times$) or timeout ($\circ$), as shown in Table \ref{tab:all_compare} and Fig.~\ref{fig:time-to-acc}. In contrast, TACO incurs minimal additional computation overhead and reduces 25.6\%--62.7\% client computation time to reach the target accuracy compared to FedAvg, making it the most time-efficient one among all the algorithms.
\begin{figure}
	\centering
	\begin{subfigure}[t]{0.48\linewidth}
		\includegraphics[width=\textwidth]{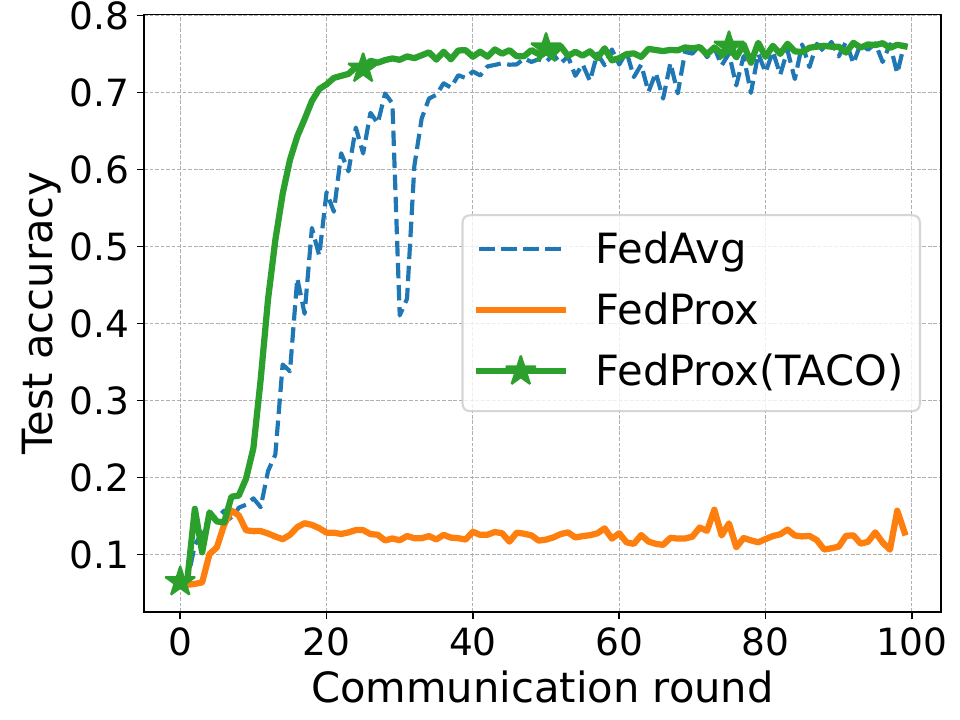}
		\caption{FedProx-SVHN}
	\end{subfigure}
	\begin{subfigure}[t]{0.48\linewidth}
		\includegraphics[width=\textwidth]{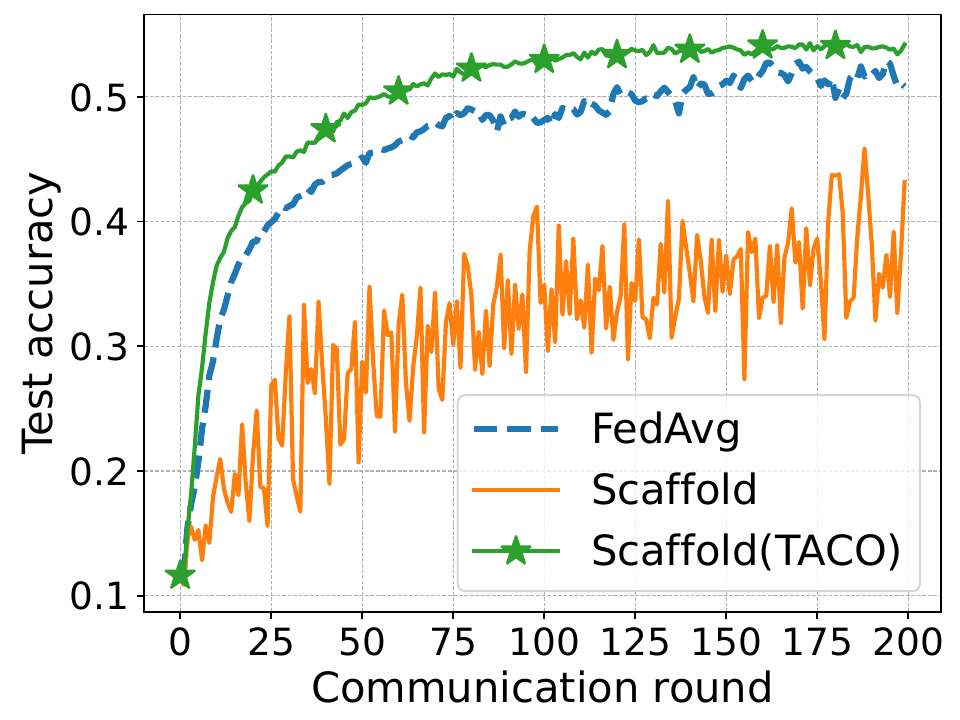}
		\caption{Scaffold-CIFAR10}
	\end{subfigure}
	\caption{Performance gain in prior methods using TACO.}
	\label{fig:extension}
\end{figure}
\subsection{Ablation Study}
Table \ref{tab:ablation} evaluates the contribution of the tailored correction design \eqref{eq:local_update_TACO} and tailored aggregation design \eqref{eq:taco_aggregate} in TACO using FEMNIST and adult datasets across different non-IID settings. We discover that compared to the tailored aggregation design, the tailored correction mechanism proposed in Eq.~\eqref{eq:local_update_TACO} contributes more significantly to the model performance improvement, further demonstrating its importance and superiority.
\subsection{Scalability and Sensitivity Evaluation}
\begin{table}[t]
\centering
\caption{Ablation study (20-client, final test accuracy)}
\begin{tabular}{cccccc}
\toprule
\multirow{2}{*}{\begin{tabular}[c]{@{}c@{}}Tailored\\ Corr.\end{tabular}} & \multirow{2}{*}{\begin{tabular}[c]{@{}c@{}}Tailored\\ Agg.\end{tabular}} & \multicolumn{2}{c}{FEMNIST} & \multicolumn{2}{c}{adult} \\
                                                                          &                                                                          & Dir(0.2)     & Dir(0.5)     & Dir(0.1)    & Dir(0.5)    \\ \hline
\ding{55}                                                                 & \ding{55}                                                                & 89.98\%        & 92.93\%        & 78.42\%       & 78.88\%       \\ 
\ding{55}                                                                 & \ding{51}                                                                & 91.30\%        & 93.20\%        & 79.76\%       & 83.34\%       \\ 
\ding{51}                                                                 & \ding{55}                                                                & 95.16\%        & 95.91\%        & 83.52\%       & 83.69\%             \\ 
\ding{51}                                                                 & \ding{51}                                                                & \textbf{95.22\%}        & \textbf{96.10\%}        & \textbf{83.64\%}       & \textbf{83.80\%}       \\ \bottomrule
\end{tabular}
\label{tab:ablation}
\end{table}
\begin{table}[t]
\centering
\caption{Scalability analysis (100-client)}
\begin{tabular}{cccc}
\toprule
Datasets          & adult & FEMNIST             & CIFAR-100                          \\ \hline
FedAvg    & 76.72±0.43\% & 88.14±0.28\%          & 42.33±0.40\%                   \\
FedProx   & 77.29±1.58\% & 89.54±0.15\%          & 41.83±0.15\%                   \\
FoolsGold & 83.28±0.20\% & 89.66±0.23\%          & 42.39±0.12\%                   \\
Scaffold  & 80.08±0.33\% & 92.26±0.48\%          & 49.77±0.22\%                    \\
STEM      & 77.89±0.47\% & 86.56±0.58\%          & 42.44±0.08\%                   \\
FedACG    & 83.48±0.13\% & 87.90±0.34\%          & 41.99±0.13\%                   \\
TACO      & \textbf{83.71±0.16\%} & \textbf{92.86±0.25\%} & \textbf{53.65±0.13\%}  \\ \bottomrule
\end{tabular}
\label{tab:scalability}
\vspace{-3mm}
\end{table}
\begin{table}[t]
\centering
\caption{Sensitivity of thresholds $\lambda$ and $\kappa$ (FMNIST)}
\begin{tabular}{ccccccc}
\hline
         & \multicolumn{2}{c}{$\lambda=T/10$}                          & \multicolumn{2}{c}{$\lambda=T/5$}                           & \multicolumn{2}{c}{$\lambda=T/2$}                           \\ \hline
$\kappa$ & TPR                           & FPR                         & TPR                           & FPR                         & TPR                           & FPR                         \\
0.4      & 100\%                         & 25\%                        & 100\%                         & 12.5\%                      & 100\%                         & 0\%                         \\
0.5      & 100\%                         & 6.25\%                      & \cellcolor[HTML]{EFEFEF}100\% & \cellcolor[HTML]{EFEFEF}0\% & \cellcolor[HTML]{EFEFEF}100\% & \cellcolor[HTML]{EFEFEF}0\% \\
0.6-0.8  & \cellcolor[HTML]{EFEFEF}100\% & \cellcolor[HTML]{EFEFEF}0\% & \cellcolor[HTML]{EFEFEF}100\% & \cellcolor[HTML]{EFEFEF}0\% & \cellcolor[HTML]{EFEFEF}100\% & \cellcolor[HTML]{EFEFEF}0\% \\
0.9      & \cellcolor[HTML]{EFEFEF}100\% & \cellcolor[HTML]{EFEFEF}0\% & \cellcolor[HTML]{EFEFEF}100\% & \cellcolor[HTML]{EFEFEF}0\% & 0\%                           & 0\%                         \\
1.0      & 0\%                           & 0\%                         & 0\%                           & 0\%                         & 0\%                           & 0\%                         \\ \hline
\end{tabular}
\label{tab:free_thres}
\vspace{-0.5em}
\end{table}

\begin{figure}[t]
    \centering
    \includegraphics[width=0.95\linewidth]{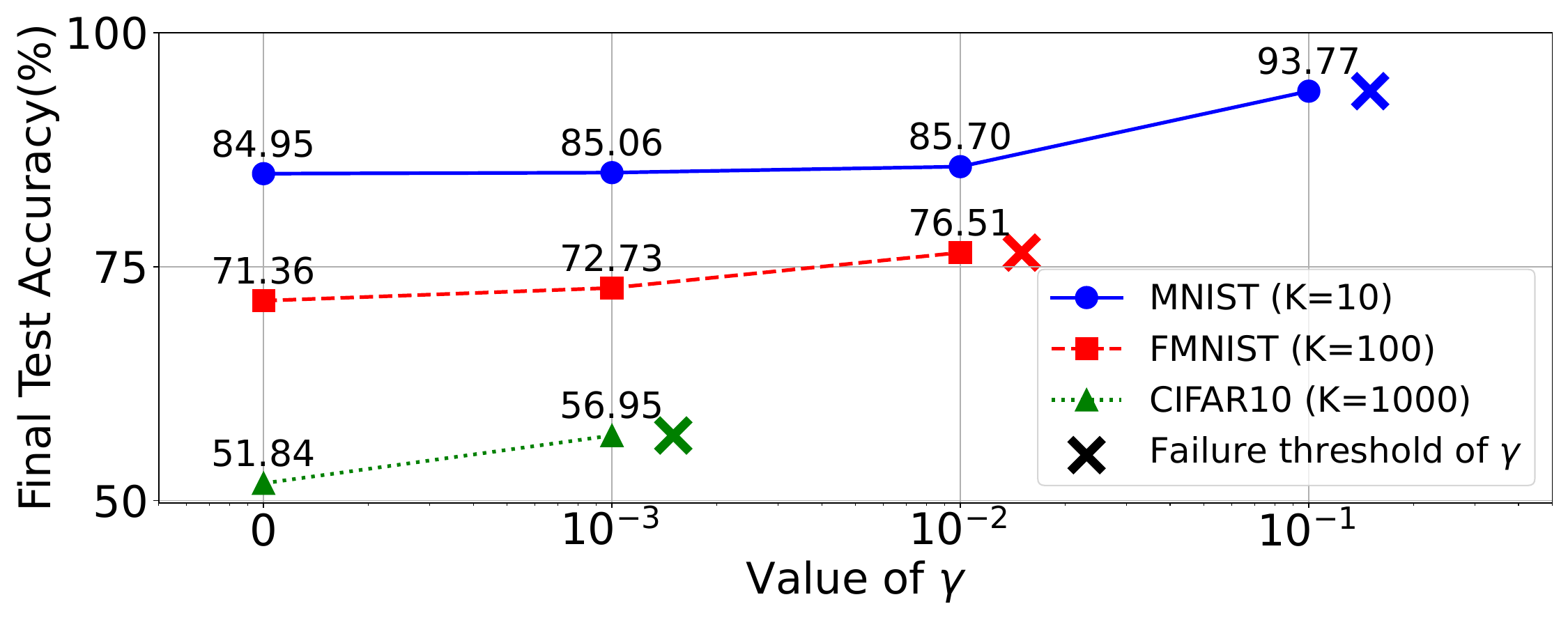}
    \caption{Sensitivity of $\gamma$.}
    \label{fig:sensitivity_gamma}
    \vspace{-1.5em}
\end{figure}

We then evaluate the scalability of our TACO algorithm by conducting large-scale experiments with 100 clients on the adult, FEMNIST, and CIFAR-100 datasets. 
We observe that TACO can consistently outperform existing methods in all datasets and demonstrate pronounced advantage in the larger-scale scenario with 100 clients, showing its great scalability.

We further evaluate the sensitivity of the maximum correction factor $\gamma$ in Eq.~\eqref{eq:local_update_TACO}. The candidate values of $\gamma$ are \{0, 0.001, 0.01, 0.1, 1.0\}. We conduct experiments on three datasets (MNIST, FMNIST, CIFAR-10) using the corresponding appropriate number of local update steps per round, \textit{i.e.}, $K$, for each dataset. Fig.~\ref{fig:sensitivity_gamma} underscores the pivotal role of $\gamma$, showing that larger values of $\gamma$ often offer higher improvement of model correction, thus yielding better model performance. However, an inappropriate large value of $\gamma$ may also lead to convergence failure. Additionally, we discovered 
an inverse relationship between the optimal $\gamma^*$ 
and $K$ ($\gamma^* \approx \frac{1}{K}$). 
This can be explained as that with the increase in $K$, one should select a smaller value of $\gamma$ to avoid over-correction.

We also evaluate the sensitivity of detection thresholds $\lambda$ and $\kappa$ used to identify freeloaders. We review that $\kappa$ is the threshold for determining whether a client is a suspicious freeloader in Eq.~\eqref{eq:lazy-threshold}, and $\lambda$ is the number of times a client is suspected of being a freeloader before being evicted (Line \ref{line:inspect_lazy} in Algorithm \ref{alg:taco}). The candidate values of $\kappa$ are \{0.4, 0.5, 0.6, 0.7, 0.8, 0.9, 1.0\}, and the candidate values of $\lambda$ are \{$T/10, T/5, T/2$\}, where $T$ is the number of communication rounds. As illustrated in Table \ref{tab:free_thres}, an increase in the values of $\kappa$ and $\lambda$ represents a stricter criterion to identify a client as a freeloader, which reduces the probability of falsely categorizing benign clients as freeloaders (lower FPR). However, excessively larger $\kappa$ and $\lambda$ can result in the inability to identify freeloaders (lower TPR). Conversely, smaller $\kappa$ and $\lambda$ lead to a more lenient criterion, increasing the detection rate of freeloaders but also elevating the likelihood of mistakenly classifying benign users as freeloaders (Higher TPR and FPR). To sum up, TACO demonstrates robust freeloader detection capabilities across a wide range of values for $\kappa$ and $\lambda$ (as shown in the shaded regions in Table \ref{tab:free_thres}), and we should avoid excessively large or small values of $\kappa$ and $\lambda$, as they increase the proportion of freeloader clients participating in the FL training, thus undermining the model performance. 

\section{Related Work}

\label{sec:related}

To address the {\rm client drift} issue brought by non-IID client data in FL systems, researchers have extensively developed innovative methods based on FedAvg~\cite{mcmahan2017communication} in different stages of FL training. These methods fall into three categories based on the different correction terms employed. 
The first category of pioneering works includes FedProx~\cite{li2020federated}, FedDyn~\cite{acar2021federated}, and FedDC~\cite{gao2022feddc} which mitigate client drift by adding a regularization term to the local loss function. 
The second approach focuses on global aggregation function calibration, \textit{e.g.}, FoolsGold~\cite{fung2020limitations}, FedNova~\cite{wang2020tackling}, and Ruan \textit{et al.}~\cite{ruan2021towards} refine the global aggregation function to ease client drift. 
The last popular mainstream leverages momentum-based model updates by utilizing the global gradient or model parameters to rectify local updates, such as Scaffold~\cite{karimireddy2020scaffold}, STEM~\cite{khanduri2021stem}, FedACG~\cite{fedacg}, FedMoS~\cite{wang2023fedmos}, Mime \cite{karimireddy2020mime}, and ~\cite{karimireddy2020mime,blanchard2017machine, haddadpour2021federated}. Scaffold~\cite{karimireddy2020scaffold} estimates and utilizes a variance reduction term as the momentum to help correct client-shift during the local update steps, and STEM~\cite{khanduri2021stem} further improves this momentum-based method by incorporating the benefits of both local and global momentum in the server and clients. TACO enhances the combination of the latter two approaches through tailored correction and tailored model aggregation. Nonetheless, the methodologies of the first approach are orthogonal to ours, suggesting that they could also benefit from integrating our tailored correction coefficients design in TACO to further refine their techniques. We have shown the enhancements in FedProx and Scaffold after the application of our tailored correction design in TACO (Fig. \ref{fig:extension}).

\section{Conclusion}

\label{sec:conclusion}

In this work, we thoroughly re-evaluate pioneering FL algorithms under non-IID data settings and disclose their existing weaknesses during the training, including instability and unsatisfactory time-to-accuracy performance. We unveil an over-correction phenomenon embedded in these algorithms to explain their suboptimal performance and instability during the FL training. To address these challenges, we propose a full-fledged and lightweight FL algorithm named TACO. It can implement tailored adaptive corrections for different clients to address the over-correction problem under non-IID data, while providing a certain level of robustness to freeloaders without imposing excessive computation burden on clients. Theoretical analysis and extensive experiments demonstrate the superiority, adaptability, and scalability of TACO.
\appendix
\allowdisplaybreaks[3]    
To stress the significance of correction factors $1-\alpha_i^t$, we set $\gamma=1$ for brevity, other values of $\gamma$ can also be incorporated.

\begin{proof}[Proof of Theorem \ref{thm:1}]
\setlength{\jot}{1pt}
\begin{align*}
&\mathbb{E}\left[f(\mathbf{z}_{t+1})\right] \\
&=\mathbb{E}\left[f(\mathbf{z}_t)\right]+\mathbb{E}\left[\left<\nabla f(\mathbf{z}_t),\mathbf{z}_{t+1}-\mathbf{z}_t\right>\right]+\frac{L}{2}\mathbb{E}\left[\|\mathbf{z}_{t+1}-\mathbf{z}_t\|^2\right] \\
&=\mathbb{E}\left[f(\mathbf{z}_{t})\right]-\eta_{g}\mathbb{E}\left[\left\langle\nabla f(\mathbf{z}_{t}),\frac{1}{ KN}\sum_{i\in[N],k\in[K]}\nabla f_{i}(\mathbf{w}_{i,k}^{t})\right\rangle\right]\\
&+\frac{L}{2}\mathbb{E}\left[\|\mathbf{z}_{t+1}-\mathbf{z}_{t}\|^{2}\right] \quad \text{(L-smooth and Definition 1)} \\
&=\mathbb{E}\left[f(\mathbf{z}_t)\right]-\eta_g\mathbb{E}\left[\|\nabla f(\mathbf{z}_t)\|^2\right]+\frac{L}{2}\eta_g^2\mathbb{E}\left[\|\tilde{\Delta}_t\|^2\right] \,\text{(Lemma \ref{lem:z_t})}\\
&-\eta_g\mathbb{E}\left[\left\langle\nabla f(\mathbf{z}_t),\frac{1}{KN}\sum_{i\in[N],k\in[K]}(\nabla f_i(\mathbf{w}_{i,k}^t)-\nabla f_i(\mathbf{z}_t))\right\rangle\right]  \\
&\leq\mathbb{E}\left[f(\mathbf{z}_{t})\right]-\eta_g\mathbb{E}\left[\|\nabla f(\mathbf{z}_t)\|^2\right]+\frac{L}{2}\eta_g^2\mathbb{E}\left[\|\tilde{\Delta}_t\|^2\right]\\
&+\frac{\eta_{g}}{2}\mathbb{E}\left[\|\nabla f(\mathbf{z}_{t})\|^{2}+\frac{1}{KN}\sum_{i\in[N]}^{k\in[K]}\|\nabla f_{i}(\mathbf{w}_{i,k}^{t})-\nabla f_{i}(\mathbf{z}_{t})\|^{2}\right]\\
 &\leq\mathbb{E}\left[f(z_{t})\right]-\eta_{g}\mathbb{E}\left[\|\nabla f(z_{t})\|^{2}\right]+\frac{L}{2}\eta_{g}^{2}\mathbb{E}\left[\|\tilde{\Delta}_{t}\|^{2}\right]\\
 &+\frac{\eta_{g}}{2}\mathbb{E}\left[\|\nabla f(z_{t})\|^{2}\right]+\frac{\eta_{g}}{KN}\sum_{i\in[N]}^{k\in[K]}\big(\mathbb{E}\big [\|\nabla f_{i}(\mathbf{w}_{i,k}^{t}) -\nabla f_{i}(\mathbf{w}_{t})\|^{2}\\
 &+\|\nabla f_{i}(\mathbf{w}_{t})-\nabla f_{i}(z_{t})\|^{2}\big ]\big) \quad\quad \text{(Triangle inequality)} \\
&\leq\mathbb{E}\left[f(\mathbf{z}_{t})-\eta_{g}\|\nabla f(\mathbf{z}_{t})\|^{2}+\frac{L}{2}\eta_{g}^{2}\|\Tilde{\Delta}_{t}\|^{2}+\frac{\eta_{g}}{2}\|\nabla f(\mathbf{z}_{t})\|^{2}\right]\\
&+\eta_{g}L^{2}\varepsilon_{t} + \eta_{g}^3L^{2}\left(1-\alpha_t\right)^{2}\mathbb{E}\left[\|\Delta_{t}\|^{2}\right]\quad\text{(L-smooth)} \\
&\leq\underbrace{\mathbb{E}\left[f(\mathbf{z}_{t})\right]-\frac{\eta_g}{2}\mathbb{E}\left[\|\nabla f(\mathbf{z}_{t})\|^{2}\right]+\frac{L\eta_{g}^{2}}{2}\mathbb{E}\left[\|\tilde{\Delta}_{t}\|^{2}\right]+\eta_{g}L^{2}\varepsilon_{t}}_{A}\\& + \eta_g^3L^2\left(1-\alpha_t\right)^{2}\mathbb{E}\left[\|\Delta_{t}\|^{2}\right]
\\& \leq A + \eta_g^3L^2\left(1-\alpha_t\right)^{2}\mathbb{E}\left[\|\Delta_{t}\|^{2}\right]\nonumber \\& = A + \eta_g^3L^2\left(\frac{1}{N}\sum_{i\in\mathcal{N}}(1-\alpha_i^t)\right)^{2}\mathbb{E}\left[\|\frac{1}{KN\eta_l} \sum_{i=1}^{N}\Delta_i^{t-1} \|^{2}\right]\nonumber \\
&\leq A + \frac{\eta_g^3L^2}{K^2N^4\eta_l^2}\left(\sum_{i\in\mathcal{N}}(1-\alpha_i^t)\right)^{2}\left(\sum_{i\in\mathcal{N}}\mathbb{E}\left[\|\Delta_i^{t-1} \|\right]\right)^{2}\nonumber \\
& \text{Using Eq.~(\ref{eq:asmp2-1})/Eq.~(\ref{eq:asmp2-2}), we have } \mathbb{E}\|\Delta_i^{t-1} \|\leq \frac{\mu_i}{c_i}\left\|\nabla f(\mathbf{w}_{t-1})\right\|\nonumber\\
&\leq A + \frac{\eta_g^3L^2}{K^2N^4\eta_l^2}\left(\sum_{i\in\mathcal{N}}(1-\alpha_i^t)\cdot\sum_{i\in\mathcal{N}}\frac{\mu_i}{c_i}\left\|\nabla f(\mathbf{w}_{t-1})\right\| \right)^{2}\\
&\leq A + \eta_g^3\underbrace{\frac{L^2G^2}{K^2N^4\eta_l^2}\left(\sum_{i\in\mathcal{N}}(1-\alpha_i^t)\cdot\sum_{i\in\mathcal{N}}\frac{\mu_i}{c_i} \right)^{2}}_{Y_t} \;\text{(Assumption \ref{asmp3})}\\
& = \mathbb{E}\left[f(\mathbf{z}_{t})\right]-\frac{\eta_g}{2}\mathbb{E}\left[\|\nabla f(\mathbf{z}_{t})\|^{2}\right]+\frac{L\eta_{g}^{2}}{2}\mathbb{E}\left[\|\tilde{\Delta}_{t}\|^{2}\right]\\&+\eta_{g}L^{2}\varepsilon_{t} + \eta_g^3Y_t.
\end{align*}
\end{proof}
\begin{proof}[Proof of Corollary \ref{cor:converge_rate}]
Rearrange the inequality in Theorem \ref{thm:1} and apply the definition of $Y = \mathop{\max}_{t\in[T]} Y_t$, we have:
\begin{align}
    \mathbb{E}\left[\|\nabla f(\mathbf{z}_{t})\|^{2}\right] &\leq \frac{2}{\eta_g}\left(\mathbb{E}\left[f(\mathbf{z}_{t})\right]-\mathbb{E}\left[f(\mathbf{z}_{t+1})\right]\right)+ L\eta_g\|\tilde{\Delta}_{t}\|^{2} \nonumber \\&+ 2L^2\varepsilon_{t} + 2\eta_g^2Y.
    \label{eq:proof_cor2}
\end{align}
According to the the Lemma B.6 derived in the pioneering work \cite{xu2021fedcm}, for a sequence $a_t = \frac{1}{\eta}(b_t - b_{t+1}) + c_1\eta +c_2\eta^2$, where $\{b_t\}_{t\geq0}$ is a non-negative sequence, constants $c_1, c_2\geq 0$, and the learning rate $\eta>0$. Given any training round $T\in\mathcal{N}_{+}$, there always exists a learning rate $\eta$ satisfies:
\begin{equation}
\frac{1}{T}\sum_{t\in[T]}a_t=\mathcal{O}(\sqrt{\frac{b_0c_1}{T}}+\sqrt[3]{\frac{b_0^2c_2}{T^2}}).
\label{eq:lemmab6}
\end{equation}
Applying Eq.~\eqref{eq:lemmab6} into the right side of Eq.~\eqref{eq:proof_cor2}, we derive the convergence rate:
\begin{equation*}
    \frac{1}{T}\sum_{t=1}^{T}\mathbb{E}\left[\|\nabla f(\mathbf{z}_{t})\|^{2}\right] = \mathcal{O}\left(\sqrt{\frac{L}{T}}+\sqrt[3]{\frac{Y}{T^2}}\right).
    \label{eq:convergence_rate_appendix}
\end{equation*}
\end{proof}
\begin{proof}[Proof of Corollary \ref{cor:optimal-alpha}]
    According to the Cauchy-Schwarz inequality, given some scalar $\sigma>0$, minimizing the error term $Y_t = \frac{L^2G^2}{K^2N^4\eta_l^2}\left(\sum_{i\in\mathcal{N}}(1-\alpha_i^t)\sum_{i\in\mathcal{N}}\frac{\mu_i}{c_i} \right)^{2}$ requires: 
\begin{equation*}
    (1-\alpha_i^t)\propto \frac{\mu_i}{c_i}, \left(\text{recall: }\sum\nolimits_{i\in\mathcal{N}}(1-\alpha_i^t)\geq \sigma\right)
\end{equation*}
The error term $Y_t$ in Theorem \ref{thm:1} and $Y$ in Corollary \ref{cor:converge_rate} reach the minimum value. 
\end{proof}

	\clearpage
	\bibliographystyle{IEEEtran}
	\bibliography{main.bib}
\end{document}